\documentclass[letterpaper]{article}

\usepackage[utf8]{inputenc}
\usepackage{amsmath,amscd,amssymb,amsthm}
\newtheorem{theorem}{Theorem}

\newtheorem{corollary}{Corollary}
\usepackage{verbatim}
\usepackage{thmtools}
\usepackage{thm-restate}

\usepackage[margin=1in]{geometry}

\usepackage[framemethod=TikZ]{mdframed}
\mdfdefinestyle{MyFrame}{%
    linecolor=black,
    outerlinewidth=2pt,
    roundcorner=20pt,
  %  innertopmargin=0.1\baselineskip,
  %  innerbottommargin=0.1\baselineskip,
    innerrightmargin=20pt,
    innerleftmargin=20pt,
    backgroundcolor=white}
\usepackage{natbib}
\usepackage{hyperref}

\newcommand{\R}{\mathbb{R}}
\newcommand{\E}{\mathop{\mathbb{E}}}
\newcommand{\argmin}{\mathop{\text{argmin}}}

\newcommand{\x}{x^\star}
\newcommand{\ol}{\mathcal{A}}

\usepackage{graphicx}

\usepackage{algorithm}
\usepackage{algorithmic}

\title{Anytime Online-to-Batch Conversions, Optimism, and Acceleration}

\author{
  \textbf{Ashok Cutkosky}\\Google\\\texttt{ashok@cutkosky.com}
}
\date{}

\usepackage{times}
\begin{document}

\maketitle

\begin{abstract}
A standard way to obtain convergence guarantees in stochastic convex optimization is to run an online learning algorithm and then output the average of its iterates: the actual iterates of the online learning algorithm do not come with individual guarantees. We close this gap by introducing a black-box modification to any online learning algorithm whose iterates converge to the optimum in stochastic scenarios. We then consider the case of smooth losses, and show that combining our approach with optimistic online learning algorithms immediately yields a fast convergence rate of $O(L/T^{3/2}+\sigma/\sqrt{T})$ on $L$-smooth problems with $\sigma^2$ variance in the gradients. Finally, we provide a reduction that converts any adaptive online algorithm into one that obtains the optimal accelerated rate of $\tilde O(L/T^2 + \sigma/\sqrt{T})$, while still maintaining $\tilde O(1/\sqrt{T})$ convergence in the non-smooth setting. Importantly, our algorithms adapt to $L$ and $\sigma$ automatically: they do not need to know either to obtain these rates.
\end{abstract}

\section{Online-to-Batch Conversions}

We consider convex stochastic optimization problems, where our objective is to minimize some convex function $\mathcal{L}:D\to \R$ where $D$ is some convex domain. We do not have true access to $\mathcal{L}$, however. Instead, we have a stochastic gradient oracle that given a point $x\in D$ will provide a random value $g$ such that $\E[g] = \nabla \mathcal{L}(x)$. Our objective is to use this noisy information to optimize $\mathcal{L}$.

A simple and extremely effective method for solving stochastic optimization problems is through online learning and online-to-batch conversion~\citep{shalev2011online, cesa2004generalization}. These techniques require remarkably few assumptions about the nature of the expected loss or the stochasticity in the system and yet still obtain optimal or near-optimal guarantees. This has helped fuel the widespread adoption of online learning algorithms as the method-of-choice in training machine learning models. Briefly, an online learning algorithm accepts a sequence of convex loss functions $\ell_1,\dots,\ell_T$ and outputs a sequence of iterates $w_1,\dots,w_T\in D$ where $D$ is some convex space and $w_t$ is output \emph{before} the algorithm observes $\ell_t$. Performance is measured by the regret:
\begin{align*}
    R_T(\x) = \sum_{t=1}^T \ell_t(w_t)-\ell_t(\x)
\end{align*}
A standard goal in online learning is to achieve \emph{sublinear regret}, which means that $\lim_{T\to\infty} R_T(\x)/T=0$. This indicates that the algorithm is doing just as well ``on average'' as the fixed benchmark point $\x$. In fact, most algorithms obtain non-asymptotic guarantees of the form $R_T(\x)=O(\sqrt{T})$, so that $R_T(\x)/T=O(1/\sqrt{T})$.

% In many cases, it suffices to consider the situation in which each $\ell_t$ is linear, $\ell_t(x) = \langle g_t, x\rangle$, in which case the regret is also a linear function of $\w$. In particular, it can be shown that linear losses are actually the hardest kind of convex loss, and so the linear case suffices to prove bounds on general convex losses. Considering linear losses yields the online gradient descent algorithm and its popular variations, such as Adagrad. In our scenario, linear losses are attractive because we may set $g_t$ to be our gradient estimate at $x_t$.

Online learning algorithms often adopt an adversarial model, in which no relationship is posited between $\ell_t$, but in our stochastic optimization problem we know that the $\ell_t$ are generated by some random process. This is where the Online-to-Batch conversion technique comes in~\citep{cesa2004generalization}. The classic argument is as follows: Set $\ell_t(x) = \langle g_t,x\rangle$ where $g_t$ is a stochastic gradient evaluated at $w_t$. Then observe $\mathcal{L}(w_t)-\mathcal{L}(\x)\le \E[\langle g_t,w_t-\x\rangle]$ and apply Jensen's inequality to obtain:
\begin{align*}
    \E\left[\mathcal{L}\left(\frac{\sum_{t=1}^T w_t}{T}\right)- \mathcal{L}(\x)\right]\le \frac{\E[R_T(\x)]}{T}
\end{align*}
We therefore output $\hat x =\tfrac{\sum_{t=1}^T w_t}{T}$ as an estimate of $\x$, and so long as the algorithm obtains sublinear regret, $\mathcal{L}(\hat x)-\mathcal{L}(\x)$ will approach zero in expectation. In fact, with $R_T(\x)=O(\sqrt{T})$, one obtains a convergence rate $O(1/\sqrt{T})$, which is often statistically optimal. 

One drawback of the online-to-batch conversion is that the iterates $w_t$ produced by the algorithm (where the noisy gradients are actually evaluated) do not necessarily converge to the optimal loss value. In fact, there is typically very little known about the behavior of any individual $w_t$. This is aesthetically unsatisfying and may even reduce performance. For example, \emph{optimistic} online algorithms can take advantage of stability in the gradients, performing well when $g_{t-1}\approx g_t$. We can hope for this behavior because intuitively the iterates should converge to $\x$ and so become closer together. Unfortunately, because actually we usually have few guarantees about the individual iterates $w_t$, it may not hold that $g_{t-1}\approx g_t$. We would like to make intuition match theory by enforcing some kind of stability in the iterates.

We address this problem by providing a black-box online-to-batch conversion: the iterates $x_t$ produced by our algorithm converge in the sense that $\mathcal{L}(x_t)\to\mathcal{L}(\x)$ (Section \ref{sec:otb}). We call this property \emph{anytime}, because the last iterate is always a good estimate of $\x$ at any time. Our reduction is quite simple, and bears strong similarity to the classical one. It stabilizes the iterates $x_t$, and we can exploit this stability when $\mathcal{L}$ is smooth. For example, when applied to an optimistic online algorithm, our reduction can leverage stability to improve the convergence rate on smooth losses from $O(L/T)$ to $O(L/T^{3/2})$ (Section \ref{sec:optimism}). Further, our reduction also has a surprising connection to the linear coupling framework for \emph{accelerated} algorithms \citep{allen2014linear}. We develop this connection to provide an algorithm that obtains a near-optimal (up to log factors) $\tilde O(L/T^2+\sigma/\sqrt{T})$ convergence rate for stochastic smooth losses with $\sigma^2=\text{Var}(g_t)$ without knowledge of $L$ or $\sigma$ while still guaranteeing $\tilde O(1/\sqrt{T})$ convergence rate for non-smooth losses (Section \ref{sec:acceleration}). In addition to these new algorithms, we feel that our analysis itself is interesting for its appealingly simplicity.

\subsection{Notation and Definitions}
We frequently use the compressed-sum notation $\alpha_{1:t} = \sum_{i=1}^t \alpha_i$ for any indexed variables $\alpha_t$. A convex function $f$ is $L$-smooth if $f(x+\delta)\le f(x) +\langle \nabla f(x), \delta\rangle + \tfrac{L}{2}\|\delta\|^2$ for and $x,\delta$, and $f$ is $\mu$ strongly convex if $f(x+\delta)\ge f(x) + \langle \nabla f(x),\delta\rangle +\tfrac{\mu}{2}\|\delta\|^2$ for all $x,\delta$. Given a convex function $f$ we say that $g$ is a subgradient of $f$ at $x$, or $g\in\partial f(x)$ if $f(y)\ge f(x) + \langle g,y-x\rangle$ for all $y$. $\nabla f(x)\in \partial f(x)$ if $f$ is differentiable.

\section{Anytime Online-to-Batch}\label{sec:otb}
In this section we provide our anytime online-to-batch conversion. Our algorithm is actually nearly identical to the classic online to batch: we set the $t$th iterate $x_t$ to be the average of the first $t$ iterates of some online learning algorithm $\ol$. The key difference is that we evaluate the stochastic gradient oracle at $x_t$, rather than the iterates provided by $\ol$. As a result, the outputs of $\ol$ in some sense exist only for analysis and are not directly visible outside the algorithm. Further, we incorporate \emph{weights} $\alpha_t$ into our conversion. Inspired by \citep{levy2017online}, these weights play a role in achieving faster rates on smooth losses, as well as removing log factors on strongly-convex losses. We provide specific pseudocode and analysis in Algorithm \ref{alg:aob} and Theorem \ref{thm:anytimeob} below.
 \begin{algorithm}\caption{Anytime Online-to-Batch}
   \label{alg:aob}
\begin{algorithmic}
   \STATE {\bfseries Input:} Online learning algorithms $\ol$ with convex domain $D$. Non-negative weights $\alpha_1,\dots,\alpha_T$ with $\alpha_1>0$.
   \STATE Get initial point $w_1\in D$ from $\ol$.
%   \STATE $\alpha_0\gets 0$
%   \STATE $\tau_0\gets 1$
%   \STATE $x_1\gets w_1$.
   \FOR{$t=1$ {\bfseries to} $T$}
%   \STATE $\tau_t \gets \frac{\alpha_t\tau_{t-1}}{\alpha_{t-1} + \alpha_t\tau_{t-1}}$
   \STATE $x_{t} \gets \frac{\sum_{i=1}^t \alpha_i w_t}{\alpha_{1:t}}$.
   \STATE Play $x_t$, receive subgradient $g_t$.
%   \STATE Compute $g_t \in \partial \ell_t(x_t)$.
   \STATE Send $\ell_t(x)=\langle \alpha_t g_t,x\rangle$ to $\ol$ as the $t$th loss.
   \STATE Get $w_{t+1}$ from $\ol$.
   \ENDFOR
   \RETURN $x_{T}$.
\end{algorithmic}
\end{algorithm}

\begin{theorem}\label{thm:anytimeob}
Suppose $g_1,\dots,g_t$ satisfy $\E[g_t|x_t]\in \partial \mathcal{L}(x_t)$ for some function $\mathcal{L}$ and $g_t$ is independent of all other quantities given $x_t$. Let $R_T(\x)$ be a bound on the linearized regret of $\ol$:
\begin{align*}
    R_T(\x) \ge \sum_{t=1}^T \langle \alpha_t g_t, w_t - \x\rangle 
\end{align*}
Then for all $\x\in D$, Algorithm \ref{alg:aob} guarantees:
\begin{align*}
    \E[\mathcal{L}(x_T) - \mathcal{L}(\x)]&\le \E\left[\frac{R_T(\x)}{\sum_{t=1}^T a_t}\right]
\end{align*}
Further, suppose that $D$ has diameter $B=\sup_{x,y\in D}\|x-y\|$ and $\|g_t\|_\star \le G$ with probability 1 for some $G$. Then with probability at least $1-\delta$,
\begin{align*}
    \mathcal{L}(x_T) - \mathcal{L}(\x) &\le\frac{R_T(\x)+2BG\sqrt{\sum_{t=1}^T \alpha_t^2 \log(2/\delta)}}{\sum_{t=1}^T \alpha_t}
\end{align*}
\end{theorem}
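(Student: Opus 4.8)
The plan is to prove a single deterministic (pathwise) inequality,
\[
\alpha_{1:T}\bigl(\mathcal{L}(x_T)-\mathcal{L}(\x)\bigr)\le \sum_{t=1}^T \langle \alpha_t \hat g_t, w_t-\x\rangle,
\]
where $\hat g_t := \E[g_t\mid x_t]\in\partial\mathcal{L}(x_t)$, and then derive both claims from it: the expectation bound by swapping $\hat g_t$ for $g_t$ under a conditional expectation, and the high-probability bound by controlling the resulting martingale error with a Hoeffding-type inequality. First I would record the recursion that drives everything. Since $x_t=\frac{\sum_{i=1}^t\alpha_i w_i}{\alpha_{1:t}}$, we have $\alpha_{1:t}x_t=\alpha_{1:t-1}x_{t-1}+\alpha_t w_t$, equivalently $\alpha_t(w_t-\x)=\alpha_{1:t}(x_t-\x)-\alpha_{1:t-1}(x_{t-1}-\x)$, and $x_t$ is a convex combination of $x_{t-1}$ and $w_t$.

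The crux of the argument is the following observation: although $g_t$ is evaluated at the \emph{average} $x_t$ rather than at the online iterate $w_t$, the point $x_t$ is a \emph{global minimizer} of the convex surrogate $\phi_t(y):=\mathcal{L}(y)-\langle \hat g_t,y\rangle$, because $\hat g_t\in\partial\mathcal{L}(x_t)$ gives $0\in\partial\phi_t(x_t)$. With this in hand the telescoping is routine. Using $\alpha_{1:t}=\alpha_{1:t-1}+\alpha_t$ together with the minimizer property twice (to get $\phi_t(x_t)\le\phi_t(x_{t-1})$ and $\phi_t(x_t)\le\phi_t(\x)$), I would show
\[
\alpha_{1:t}\phi_t(x_t)-\alpha_{1:t-1}\phi_t(x_{t-1})\le \alpha_t\phi_t(\x).
\]
Expanding $\phi_t$ and using the recursion to rewrite $\alpha_{1:t}\langle \hat g_t,x_t\rangle-\alpha_{1:t-1}\langle \hat g_t,x_{t-1}\rangle=\langle \alpha_t\hat g_t,w_t\rangle$ converts this into $\alpha_{1:t}\mathcal{L}(x_t)-\alpha_{1:t-1}\mathcal{L}(x_{t-1})\le \alpha_t\mathcal{L}(\x)+\langle \alpha_t\hat g_t,w_t-\x\rangle$. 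Summing $t=1,\dots,T$ with $\alpha_{1:0}=0$ telescopes the left side and yields the deterministic bound above.

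For the expectation bound, I would take expectations and condition on $\mathcal{F}_{t-1}=\sigma(g_1,\dots,g_{t-1})$: both $w_t$ and $x_t$ are $\mathcal{F}_{t-1}$-measurable, and the conditional-independence assumption gives $\E[g_t\mid\mathcal{F}_{t-1}]=\hat g_t$, so $\E[\langle \alpha_t g_t,w_t-\x\rangle]=\E[\langle \alpha_t\hat g_t,w_t-\x\rangle]$. Hence $\E[\alpha_{1:T}(\mathcal{L}(x_T)-\mathcal{L}(\x))]\le \E[\sum_t\langle \alpha_t g_t,w_t-\x\rangle]\le \E[R_T(\x)]$, and dividing by the deterministic constant $\alpha_{1:T}=\sum_t\alpha_t$ finishes.

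For the high-probability bound I would split $\langle \alpha_t\hat g_t,w_t-\x\rangle=\langle \alpha_t g_t,w_t-\x\rangle+Z_t$ with $Z_t:=\langle \alpha_t(\hat g_t-g_t),w_t-\x\rangle$. This is a martingale difference sequence, since $\E[Z_t\mid\mathcal{F}_{t-1}]=0$, with $|Z_t|\le \alpha_t\|\hat g_t-g_t\|_\star\|w_t-\x\|\le 2BG\alpha_t$, using the diameter bound and $\|\hat g_t\|_\star\le G$ by Jensen. A Hoeffding-type concentration inequality for bounded martingale differences then gives $\sum_t Z_t\le 2BG\sqrt{\sum_t\alpha_t^2\log(2/\delta)}$ with probability at least $1-\delta$; combining with $\sum_t\langle \alpha_t g_t,w_t-\x\rangle\le R_T(\x)$ and dividing by $\alpha_{1:T}$ produces the stated inequality. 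The only genuinely novel step is the minimizer observation that makes the telescope close despite the gradient being taken at $x_t$ instead of $w_t$; everything afterward is standard bookkeeping and a routine martingale tail bound.
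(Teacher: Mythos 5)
Your proposal is correct and follows essentially the same route as the paper: the same identity $\alpha_{1:t}x_t=\alpha_{1:t-1}x_{t-1}+\alpha_t w_t$, the same two subgradient inequalities at $x_t$ (your ``$x_t$ minimizes $\phi_t(y)=\mathcal{L}(y)-\langle\hat g_t,y\rangle$'' is just a repackaging of $\mathcal{L}(\x)\ge\mathcal{L}(x_t)+\langle\hat g_t,\x-x_t\rangle$ and $\mathcal{L}(x_{t-1})\ge\mathcal{L}(x_t)+\langle\hat g_t,x_{t-1}-x_t\rangle$), the same telescoping, and the same martingale/Azuma step with $|Z_t|\le 2BG\alpha_t$ for the high-probability claim. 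The only organizational difference is that you prove a single pathwise inequality in terms of $\hat g_t=\E[g_t\mid x_t]$ and then specialize it twice, whereas the paper runs the argument in expectation with $g_t$ and then repeats it pathwise with $G_t$; your version is a slightly cleaner bookkeeping of the identical ideas.
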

% We remark that although Algorithm \ref{alg:aob} appears to require random functions $\ell_t$, in fact it only makes use of the subgradients $g_t$. Therefore if we only had access to an oracle that produced subgradients $g_t$ satisfying $\E[g_t|x_t]\in \partial \mathcal{L}(x_t)$, we could consider the function $\ell_t(x) = \mathcal{L}(x) + \langle g_t -G_t,x-x_t\rangle$ where $G_t\in \partial \mathcal{L}(x_t)$. This function satisfies $\E[\ell_t] =\mathcal{L}$ and $g_t\in \partial \ell_t$ so that even though we cannot actually evaluate $\ell_t$, we can still run Algorithm \ref{alg:aob} and optimize $\mathcal{L}$.

\begin{proof}
First, observe that
% \begin{align*}
%     x_t - w_t &= (1-\tau_t)(x_{t-1} - w_t)\\
%     x_t - x_{t-1} &= \tau_t(w_t-x_{t-1})\\
%     x_t-w_t &= \frac{1-\tau_t}{\tau_t}(x_{t-1}-x_t)
% \end{align*}
\begin{align*}
    % x_t - w_t &= \frac{\alpha_{1:t-1}}{\alpha_{1:t}}(x_{t-1} - w_t)\\
    % x_t - x_{t-1} &= \frac{\alpha_t}{\alpha_{1:t}}(w_t - x_{t-1})\\
    \alpha_t(x_t - w_t)&=\alpha_{1:t-1}(x_{t-1} - x_t)
\end{align*}
where by mild abuse of notation we define $\alpha_{1:0}=0$ and let $x_0$ be an arbitrary element of $D$.

Now we use the standard convexity argument to say:
\begin{align*}
    \E\left[\sum_{t=1}^T \alpha_t(\mathcal{L}(x_t) -\mathcal{L}(\x))\right]&\le \E\left[\sum_{t=1}^T \alpha_t\langle g_t, x_t -\x\rangle\right]\\
    &=\E\left[\sum_{t=1}^T \alpha_t\langle g_t, x_t - w_t\rangle + \alpha_t\langle g_t, w_t - \x\rangle\right]\\
    &\le\E\left[R_T(\x)\right] + \E\left[\sum_{t=1}^Ta_{1:t-1} \langle g_t, x_{t-1} - x_t\rangle\right]
\end{align*}
Next we use convexity again to argue $\E[\langle g_t, x_{t-1} -x_t\rangle] \le \E[\mathcal{L}(x_{t-1}) - \mathcal{L}(x_t)]$, and then we subtract $\E[\sum_{t=1}^T \alpha_t \mathcal{L}(x_t)]$ from both sides:
\begin{align*}
    \E\left[\sum_{t=1}^T \alpha_t(\mathcal{L}(x_t) -\mathcal{L}(\x))\right]&\le \E\left[R_T(\x)\right] + \E\left[\sum_{t=1}^T\alpha_{1:t-1} (\mathcal{L}(x_{t-1}) - \mathcal{L}(x_t))\right]\\
    \E\left[-\alpha_{1:T} \mathcal{L}(\x)\right]&\le \E\left[R_T(\x)\right] +\E\left[\sum_{t=1}^T \alpha_{1:t-1}\mathcal{L}(x_{t-1}) -\alpha_{1:t}\mathcal{L}(x_t)\right]
\end{align*}
% Now observe that by definition of $\tau_t$, we have
% \begin{align*}
%     \frac{\alpha_{t-1}}{\tau_{t-1}} = \alpha_t\frac{1-\tau_t}{\tau_t}
% \end{align*}
Finally, telescope the above sum to conclude:
\begin{align*}
    \E\left[\alpha_{1:T} \mathcal{L}(x_T) - \alpha_{1:T} \mathcal{L}(\x)\right] & \le \E\left[R_T(\x)\right]
\end{align*}
from which the in-expectation statement of the Theorem follows.
% Now with a little algebra we conclude:
% \begin{align*}
%     \E\left[-\alpha_{1:T} \mathcal{L}(\x)\right] &\le \E\left[R_T(\x)\right] +\E\left[\sum_{t=1}^T \alpha_{1:t-1}\mathcal{L}(x_{t-1}) -\alpha_{1:t}\mathcal{L}(x_t)\right]\\
%     &=\E\left[R_T(\x)\right] + \E\left[\alpha_{1:T}\mathcal{L}(x_T)\right] + \E\left[\sum_{t=1}^{T-1} \alpha_{1:t} (\mathcal{L}(x_t) - \mathcal{L}(x_t))\right]\\
%     \E\left[\alpha_{1:T} \mathcal{L}(x_T) - \alpha_{1:T} \mathcal{L}(\x)\right] & \le \E\left[R_T(\x)\right]\\
%     \E\left[\mathcal{L}(x_T)- \mathcal{L}(\x)\right] & \le \E\left[\frac{R_T(\x)}{\sum_{t=1}^T \alpha_T}\right]
% \end{align*}

For the high-probability statement, let $H_{t-1}$ be the history $g_{t-1},x_{t-1},\dots,g_1,x_1$. Let $G_t = \E[g_t|H_{t-1}, x_t, w_t]$. Note that $G_t$ is still a random variable, and satisfies $G_t\in \partial \mathcal{L}(x_t)$. Next, let $\epsilon_t = \alpha_t\langle G_t, w_t -\x\rangle - \alpha_t \langle g_t, w_t - \x\rangle$. Then we have $\E[\epsilon_t|H_{t-1}, x_t, w_t]=0$ and:
\begin{align*}
    \sum_{t=1}^T \epsilon_t&=\sum_{t=1}^T \alpha_t \langle G_t, x_t -\x\rangle - \sum_{t=1}^T \alpha_t\langle g_t, x_t-\x\rangle\\
    |\epsilon_t|&\le 2\alpha_tBG\text{ with probability }1
\end{align*}
So by the Azuma-Hoeffding bound, with probability at least $1-\delta:$
\begin{align*}
    \sum_{t=1}^T \epsilon_t&\le 2BG\sqrt{\sum_{t=1}^T \alpha_t^2 \log(2/\delta)}
\end{align*}
Therefore with probability at least $1-\delta$, we have
\begin{align*}
    \sum_{t=1}^T \alpha_t(\mathcal{L}(x_t) - \mathcal{L}(\x))&\le \sum_{t=1}^T \alpha_t\langle G_t, x_t - \x\rangle\\
    &\le \sum_{t=1}^T \alpha_t\langle G_t, x_t - w_t\rangle + \sum_{t=1}^T \alpha_t\langle g_t, w_t - \x\rangle + \sum_{t=1}^T \epsilon_t\\
    &\le\sum_{t=1}^T \alpha_t\langle G_t, x_t - w_t\rangle + R_T(\x)+ 2BG\sqrt{\sum_{t=1}^T \alpha_t^2 \log\left(\frac{2}{\delta}\right)}
\end{align*}
Now an identical argument to the in-expectation part of the Theorem (but without need for taking expectations) yields:
\begin{align*}
    \mathcal{L}(x_T) - \mathcal{L}(\x) & \le \frac{R_T(\x)+2BG\sqrt{\sum_{t=1}^T \alpha_t^2 \log(2/\delta)}}{\sum_{t=1}^T \alpha_t}
\end{align*}

\end{proof}

As a corollary, we observe that the simple setting of $\alpha_t=1$ for all $T$ yields a direct analog of the classic online-to-batch conversion guarantee:
\begin{corollary}\label{thm:otb}
Under the assumptions of Theorem \ref{thm:anytimeob}, set $\alpha_t=1$ for all $t$. Then $R_T(\x)=\sum_{t=1}^T \langle g_t, w_t-\x\rangle$, which is the usual un-weighted regret. We have
\begin{align*}
    \E[\mathcal{L}(x_T) - \mathcal{L}(\x)]&\le \E\left[\frac{R_T(\x)}{T}\right]
\end{align*}
Further, $x_T = \frac{1}{T} \sum_{t=1}^T w_t$.
\end{corollary}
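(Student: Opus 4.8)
The plan is to derive this directly as a specialization of Theorem \ref{thm:anytimeob} by substituting $\alpha_t = 1$ for all $t$; no new machinery is required. First I would observe that with this choice of weights, the weighted linearized regret appearing in the hypothesis of Theorem \ref{thm:anytimeob} collapses to the ordinary (un-weighted) regret, since $\sum_{t=1}^T \langle \alpha_t g_t, w_t - \x\rangle = \sum_{t=1}^T \langle g_t, w_t - \x\rangle$. Thus any bound $R_T(\x)$ on the standard regret of $\ol$ is automatically a valid bound on the weighted regret demanded by the theorem, which is precisely what lets an off-the-shelf regret guarantee serve as the required input.

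Next I would compute the normalizing denominator: with $\alpha_t = 1$ we have $\alpha_{1:T} = \sum_{t=1}^T \alpha_t = T$. Substituting both of these observations into the in-expectation conclusion of Theorem \ref{thm:anytimeob}, namely $\E[\mathcal{L}(x_T) - \mathcal{L}(\x)] \le \E\left[R_T(\x)/\sum_{t=1}^T \alpha_t\right]$, immediately yields the claimed bound $\E[\mathcal{L}(x_T) - \mathcal{L}(\x)] \le \E\left[R_T(\x)/T\right]$.

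Finally, to recover the explicit form of the output, I would unfold the definition of the iterate from Algorithm \ref{alg:aob}, which sets $x_t = \frac{\sum_{i=1}^t \alpha_i w_i}{\alpha_{1:t}}$. Evaluating at $t = T$ with every $\alpha_i = 1$ gives $x_T = \frac{1}{T}\sum_{i=1}^T w_i$, exactly the averaged iterate of the classical online-to-batch conversion. Since every step is a mechanical substitution into results already in hand, I do not anticipate a genuine obstacle; the only point worth stating carefully is the reduction of the weighted regret to the standard regret, since it is this identification that pins down the meaning of $R_T(\x)$ in the corollary's statement.
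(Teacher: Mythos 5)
Your proposal is correct and matches the paper's treatment: the paper gives no separate proof because the corollary is exactly the specialization you describe, substituting $\alpha_t=1$ into Theorem \ref{thm:anytimeob} so that the weighted regret becomes the standard regret, the denominator becomes $T$, and the iterate definition reduces to $x_T=\frac{1}{T}\sum_{t=1}^T w_t$.
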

Corollary  \ref{thm:otb} is quite similar to the classic online-to-batch conversion result: in both cases, the average of the online learner's predictions has excess loss bounded by the average regret. Again, the critical difference is that in Algorithm \ref{alg:aob}, the actual outputs where the gradients are evaluated are the averaged outputs of the online learner. Thus the loss of the iterates converges to the minimum loss for Algorithm \ref{alg:aob}, which is not the case for the standard reduction.
% This is a very desirable property in practical systems, where frequently only the last iterate of SGD, AdaGrad \citep{duchi10adagrad} or some other online learning algorithm is retained as the optimal value. While simpler and often effective in practice, taking the last iterate lacks theoretical robustness. Making use of our reduction alleviates this concern.

In addition to this anytime online-to-batch result, we show below that Algorithm \ref{alg:aob} also maintains low regret:
\begin{corollary}\label{thm:regret}
Under the assumptions of Theorem \ref{thm:anytimeob}, let $R^M(\x)\ge \max_t R_t(\x)$. Then we have
\begin{align*}
\E\left[\sum_{t=1}^T \alpha_t (\mathcal{L}(x_t) -\mathcal{L}(\x))\right]&\le \E\left[R^M(\x)\left(1+\log\left(\frac{\alpha_{1:T}}{\alpha_1}\right)\right)\right]
\end{align*}
\end{corollary}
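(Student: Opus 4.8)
The plan is to reduce the corollary to repeated applications of the in-expectation bound already established in Theorem \ref{thm:anytimeob}, combined with an elementary estimate on a weighted harmonic sum. The first observation is that the in-expectation conclusion of Theorem \ref{thm:anytimeob} uses nothing special about the final index $T$: the first $t$ iterates and gradients form a valid complete run of length $t$, so running the identical telescoping argument up to time $t$ gives, for every $t\le T$,
\begin{align*}
\E[\alpha_{1:t}(\mathcal{L}(x_t) - \mathcal{L}(\x))] \le \E[R_t(\x)] \le \E[R^M(\x)],
\end{align*}
where the last step uses $R^M(\x) \ge \max_t R_t(\x) \ge R_t(\x)$ pointwise. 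Since $\alpha_{1:t}$ is a deterministic input, I can divide through to obtain $\E[\mathcal{L}(x_t) - \mathcal{L}(\x)] \le \E[R^M(\x)]/\alpha_{1:t}$.

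Next I would multiply by $\alpha_t$ and sum over $t$. By linearity of expectation,
\begin{align*}
\E\left[\sum_{t=1}^T \alpha_t(\mathcal{L}(x_t) - \mathcal{L}(\x))\right] = \sum_{t=1}^T \alpha_t\, \E[\mathcal{L}(x_t) - \mathcal{L}(\x)] \le \E[R^M(\x)]\sum_{t=1}^T \frac{\alpha_t}{\alpha_{1:t}}.
\end{align*}
It therefore remains to show $\sum_{t=1}^T \alpha_t/\alpha_{1:t} \le 1 + \log(\alpha_{1:T}/\alpha_1)$.

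For this harmonic-sum estimate, the $t=1$ term equals $\alpha_1/\alpha_{1:1} = 1$, which accounts for the leading constant. For $t\ge 2$ I write $\alpha_t/\alpha_{1:t} = 1 - \alpha_{1:t-1}/\alpha_{1:t}$ and apply the elementary inequality $1 - 1/u \le \log u$ (valid for all $u>0$) with $u = \alpha_{1:t}/\alpha_{1:t-1}$, which is well-defined and positive because $\alpha_{1:t-1}\ge \alpha_1 > 0$. This yields $\alpha_t/\alpha_{1:t} \le \log(\alpha_{1:t}/\alpha_{1:t-1})$, and summing from $t=2$ to $T$ telescopes to $\log(\alpha_{1:T}/\alpha_{1:1}) = \log(\alpha_{1:T}/\alpha_1)$. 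Adding the $t=1$ term and substituting into the display above completes the proof.

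I expect the only genuinely delicate point to be the first one: justifying that Theorem \ref{thm:anytimeob} may be invoked at each intermediate horizon $t$ rather than only at $T$. This is legitimate because Algorithm \ref{alg:aob} is an online procedure whose state after $t$ rounds depends only on $g_1,\dots,g_t$ and the weights $\alpha_1,\dots,\alpha_t$, so truncating the run at time $t$ reproduces exactly the setting analyzed in the theorem. The remaining steps—linearity of expectation, pointwise domination by $R^M(\x)$, and the logarithmic bound on $\sum_t \alpha_t/\alpha_{1:t}$—are routine, with the inequality $1-1/u\le\log u$ being the single clever estimate that converts the weighted harmonic sum into a telescoping logarithmic series.
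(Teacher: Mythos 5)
Your proposal is correct and follows essentially the same route as the paper: apply the anytime bound of Theorem \ref{thm:anytimeob} at each intermediate horizon $t$, dominate $R_t(\x)$ by $R^M(\x)$, and bound the weighted harmonic sum $\sum_{t=1}^T \alpha_t/\alpha_{1:t}$ by $1+\log(\alpha_{1:T}/\alpha_1)$. Your inequality $1-1/u\le \log u$ is just a rewriting of the paper's $\log(a)+b/(a+b)\le\log(a+b)$ with $a=\alpha_{1:t-1}$, $b=\alpha_t$, so the two arguments coincide.
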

\begin{proof}
From Theorem \ref{thm:anytimeob} we have
\begin{align*}
\E[\alpha_t(\mathcal{L}(x_t) -\mathcal{L}(\x))]\le \E\left[\frac{\alpha_t R_t(\x)}{\alpha_{1:t}}\right]\le \E\left[\frac{\alpha_t R^M_t(\x)}{\alpha_{1:t}}\right]
\end{align*}
Then observe that $\log(a)+b/(a+b)\le \log(a+b)$ and sum over $t$ to conclude the Corollary.
\end{proof}
Recall that essentially all online learning regret bounds are non-decreasing in $T$, so that $\max_t R_t(\x)=R_T(\x)$. Thus the regret of Algorithm \ref{alg:aob} is only a logarithmic factor worse than the regret of the original online learner. Moreover, in the typical case that $R_t(\x)=O(\sqrt{t})$, a trivial modification of the above proof shows that $\E[\mathcal{L}(x_T) -\mathcal{L}(\x)]\le O(1/\sqrt{T})$, so that in many cases one should not even incur the log factor.

In fact, the anytime result is significantly more powerful than a standard regret bound because it provides point-wise bounds. This allows us to achieve a variety of different weighted regret bounds \emph{simultaneously}:
\begin{corollary}\label{thm:simulweight}
Under the assumptions of Theorem \ref{thm:anytimeob}, further suppose that $R_T(\x)$ is non-decreasing in $T$ and set $\alpha_t=1$. Let $s_t = t^k$ for some constants $k> 0$ (note that Algorithm \ref{alg:aob} is not aware of $s_t$). Then
\begin{align*}
    \E\left[\frac{\sum_{t=1}^T s_t (\mathcal{L}(x_t) -\mathcal{L}(\x))}{s_{1:T}}\right]\le O\left(\frac{R_T(\x)}{T}\right)
\end{align*}
\end{corollary}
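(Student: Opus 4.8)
The plan is to exploit the fact that, unlike an ordinary regret bound, Theorem~\ref{thm:anytimeob} delivers a \emph{point-wise} guarantee at every intermediate step of a single run. Since the algorithm is run once with $\alpha_t = 1$ and the bound of Theorem~\ref{thm:anytimeob} is anytime, applying it with horizon $t$ (for which the normalizer $\alpha_{1:t}$ equals $t$) yields, for every $t$,
\begin{align*}
    \E[\mathcal{L}(x_t) - \mathcal{L}(\x)] \le \E\left[\frac{R_t(\x)}{t}\right].
\end{align*}
This is precisely the inequality already used at the start of the proof of Corollary~\ref{thm:regret}, and crucially it holds simultaneously for all $t$ even though the algorithm has no knowledge of the weights $s_t$.

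Next I would form the $s_t$-weighted combination of these per-step bounds. Because the $s_t = t^k$ are deterministic, linearity of expectation gives
\begin{align*}
    \E\left[\sum_{t=1}^T s_t(\mathcal{L}(x_t) - \mathcal{L}(\x))\right] = \sum_{t=1}^T s_t\,\E[\mathcal{L}(x_t) - \mathcal{L}(\x)] \le \sum_{t=1}^T \frac{s_t}{t}\,\E[R_t(\x)].
\end{align*}
Invoking the hypothesis that $R_t(\x)$ is non-decreasing, I would replace $R_t(\x)$ by $R_T(\x)$ path-wise, and then divide through by the deterministic quantity $s_{1:T}$ to obtain
\begin{align*}
    \E\left[\frac{\sum_{t=1}^T s_t(\mathcal{L}(x_t) - \mathcal{L}(\x))}{s_{1:T}}\right] \le \frac{\sum_{t=1}^T s_t/t}{s_{1:T}}\,\E[R_T(\x)].
\end{align*}
At this point the claim has been reduced entirely to estimating the deterministic ratio of power sums $\left(\sum_{t=1}^T t^{k-1}\right)\big/\left(\sum_{t=1}^T t^k\right)$.

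Finally I would bound this ratio by integral comparison. The denominator satisfies $\sum_{t=1}^T t^k \ge \int_0^T x^k\,dx = T^{k+1}/(k+1)$, which is $\Omega(T^{k+1})$, while the numerator satisfies $\sum_{t=1}^T t^{k-1} = O(T^k)$ for every fixed $k>0$. Hence the ratio is $O(1/T)$ with a constant depending only on the fixed exponent $k$, which gives the claimed bound $O(R_T(\x)/T)$.

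I expect the only point requiring care to be the numerator estimate, since $t^{k-1}$ is increasing when $k \ge 1$ but decreasing when $0 < k < 1$; the two regimes call for slightly different integral comparisons (upper-bounding by $\int_1^{T+1} x^{k-1}\,dx$ in the former, and isolating the $t=1$ term before integrating in the latter). Both still yield $O(T^k)$ because the exponent $k-1$ exceeds $-1$ exactly when $k>0$, so the power sum diverges at the $\Theta(T^k)$ rate throughout the allowed range. Everything else is routine: the reduction to the power-sum ratio is immediate from the anytime guarantee, and no concentration or regret-specific argument beyond monotonicity of $R_t(\x)$ is needed.
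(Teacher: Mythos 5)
Your proposal is correct and follows essentially the same route as the paper's own proof: apply the anytime guarantee $\E[\mathcal{L}(x_t)-\mathcal{L}(\x)]\le \E[R_t(\x)]/t$ at each $t$, use monotonicity to replace $R_t(\x)$ by $R_T(\x)$, and reduce everything to the power-sum estimates $s_{1:T}=\Theta(T^{k+1})$ and $\sum_{t=1}^T t^{k-1}=O(T^k)$. The only difference is cosmetic (dividing by $s_{1:T}$ after summing rather than termwise), and your extra care about the $0<k<1$ regime of the numerator is a valid, if minor, refinement of the same argument.
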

\begin{proof}
Observe $s_{1:t}=\Theta(t^{k+1})$ so that $\E[s_t (\mathcal{L}(x_t) -\mathcal{L}(\x))/s_{1:T}]\le O(\E[R_T(\x)]t^{k-1}/T^{k+1})$, and sum over $t$.
\end{proof}

\section{General Analysis}\label{sec:general}
In this section we provide a more general version of our online-to-batch reduction.
The previous analysis appears to critically rely on linearized regret $\E[\sum_{t=1}^T \alpha_t(\mathcal{L}(x_t)-\mathcal{L}(\x))]\le \E[\sum_{t=1}^T \alpha_t\langle g_t, x_t-\x\rangle]$. This inequality may be tight for general convex losses, but in many cases we may want to take advantage of some known non-linearity in the losses. For example, when the loss function is $\mu$-strongly convex, one can use the inequality $\mathcal{L}(x_t)-\mathcal{L}(\x)\le \ell_t(x_t) - \ell_t(\x)$ where $\ell_t(x) = \langle \nabla \mathcal{L},x\rangle + \frac{\mu}{2}\|x-x_t\|^2$, leading to a $O(\log(T)/T)$ convergence rate rather than $O(1/\sqrt{T})$ \citep{hazan2007logarithmic}. In order to incorporate this information in our framework, we propose Algorithm \ref{alg:gaob}.

Algorithm \ref{alg:gaob} modifies Algorithm \ref{alg:aob} by considering an oracle that produces losses $\ell_t$ rather than stochastic gradients $g_t$. Specifically, we will require $\ell_t$ that are convex and lower-bound $\mathcal{L}$ in expectation. This generalizes the linear losses of Algorithm \ref{alg:aob}, and it may often be possible to construct nonlinear $\ell_t$ via only a gradient oracle, such as in the strongly-convex case. Our strategy for using these losses is essentially unchanged from that of Algorithm \ref{alg:aob}, but now our analysis is slightly more delicate since we cannot exploit the nice algebraic properties of linearity.

 \begin{algorithm}
   \caption{General Anytime Online-to-Batch}
   \label{alg:gaob}
\begin{algorithmic}
   \STATE {\bfseries Input:} Online learning algorithms $\ol$ with convex domain $D$. Non-negative weights $\alpha_1,\dots,\alpha_T$ with $\alpha_1>0$
   \STATE Get initial point $w_1\in D$ from $\ol$.
%   \STATE $\alpha_0\gets 0$
%   \STATE $\tau_0\gets 1$
%   \STATE $x_1\gets w_1$.
   \FOR{$t=1$ {\bfseries to} $T$}
%   \STATE $\tau_t \gets \frac{\alpha_t\tau_{t-1}}{\alpha_{t-1} + \alpha_t\tau_{t-1}}$
   \STATE $x_t \gets \frac{\sum_{i=1}^t \alpha_iw_i}{\alpha_{1:t}}$
%   \STATE $x_{t} \gets \tau_t w_t + (1-\tau_t)x_{t-1}$.
   \STATE Play $x_t$, compute loss $\ell_t$.
%   $\mu$-strongly convex $\ell_t$.
%   \STATE Compute $g_t \in \partial \ell_t(x_t)$.
   \STATE Send $\alpha_t\ell_t(x)$ to $\ol$ as the $t$th  loss.
   \STATE Get $w_{t+1}$ from $\ol$.
   \ENDFOR
   \RETURN $x_{T}$.
\end{algorithmic}
\end{algorithm}
\begin{theorem}\label{thm:gotb}
Suppose $\ell_t$ is convex and satisfies $\mathcal{L}(x_t)-\mathcal{L}(x)\le \E[\ell_t(x_t)-\ell_t(x)|x_t]$ for all $t$ and for all $x$. Then with
\begin{align*}
    R_T(\x) = \sum_{t=1}^T \alpha_t\ell_t(w_t)-\alpha_t\ell_t(\x_t)
\end{align*}
Algorithm \ref{alg:gaob} obtains
\begin{align*}
    \E[\mathcal{L}(x_T)-\mathcal{L}(\x)]&\le \E\left[\frac{R_T(\x)}{\sum_{t=1}^T \alpha_t}\right]
\end{align*}
\end{theorem}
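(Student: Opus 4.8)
The plan is to mirror the proof of Theorem \ref{thm:anytimeob} as closely as possible, substituting convexity of $\ell_t$ for the algebraic identities that linearity provided. The backbone is the same weight identity
\begin{align*}
    \alpha_t(x_t - w_t) = \alpha_{1:t-1}(x_{t-1} - x_t),
\end{align*}
which follows immediately from $\alpha_{1:t} x_t = \alpha_{1:t-1}x_{t-1} + \alpha_t w_t$ (with the conventions $\alpha_{1:0}=0$ and $x_0$ an arbitrary element of $D$).

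First I would invoke the hypothesis $\mathcal{L}(x_t)-\mathcal{L}(x)\le \E[\ell_t(x_t)-\ell_t(x)\mid x_t]$ with $x=\x$ and sum with weights $\alpha_t$, reducing the target to controlling $\E\left[\sum_t \alpha_t(\ell_t(x_t)-\ell_t(\x))\right]$. I then split $\ell_t(x_t)-\ell_t(\x) = (\ell_t(w_t)-\ell_t(\x)) + (\ell_t(x_t)-\ell_t(w_t))$; the first piece sums by definition to the regret $R_T(\x)$, leaving the \emph{stability} term $\sum_t \alpha_t(\ell_t(x_t)-\ell_t(w_t))$ to be absorbed into a telescoping sum.

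The core of the argument is handling this stability term without linearity. Picking a subgradient $q_t\in\partial \ell_t(x_t)$, convexity gives $\ell_t(x_t)-\ell_t(w_t)\le \langle q_t, x_t-w_t\rangle$. Multiplying by $\alpha_t$ and applying the weight identity converts this into $\alpha_{1:t-1}\langle q_t, x_{t-1}-x_t\rangle$, and a second application of convexity (comparing $\ell_t$ at $x_{t-1}$) bounds it by $\alpha_{1:t-1}(\ell_t(x_{t-1})-\ell_t(x_t))$. I then re-apply the hypothesis, this time with $x=x_{t-1}$, to pass back to $\mathcal{L}$, obtaining $\E[\ell_t(x_{t-1})-\ell_t(x_t)]\le \E[\mathcal{L}(x_{t-1})-\mathcal{L}(x_t)]$. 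Substituting this, subtracting $\E\left[\sum_t \alpha_t \mathcal{L}(x_t)\right]$ from both sides, and regrouping leaves the sum $\sum_t \alpha_{1:t-1}\mathcal{L}(x_{t-1}) - \alpha_{1:t}\mathcal{L}(x_t)$, which telescopes to $-\alpha_{1:T}\mathcal{L}(x_T)$; dividing by $\alpha_{1:T}$ yields the claimed bound.

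The main obstacle is exactly the step the section flags as ``more delicate.'' In the linear case one passes directly from $\langle g_t, x_{t-1}-x_t\rangle$ to $\mathcal{L}(x_{t-1})-\mathcal{L}(x_t)$ in a single convexity move, whereas the nonlinear $\ell_t$ forces a detour through $\ell_t(x_{t-1})-\ell_t(x_t)$ followed by a second invocation of the domination hypothesis. The subtlety is that this second invocation uses the \emph{random} argument $x=x_{t-1}$, so one must condition on the history through which $x_t$ is generated---under which both $x_{t-1}$ and $x_t$ are determined---before substituting $x_{t-1}$ for $x$ and taking full expectations. Once the conditioning is arranged so that $x_{t-1}$ is legitimately a fixed point of the hypothesis, the remaining telescoping and regrouping are identical to those in Theorem \ref{thm:anytimeob}.
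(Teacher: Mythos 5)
Your argument is correct and follows the paper's proof essentially step for step: the same split of $\sum_t \alpha_t(\ell_t(x_t)-\ell_t(\x))$ into the regret $R_T(\x)$ plus a stability term, the same invocations of the hypothesis at $x=\x$ and at $x=x_{t-1}$, and the same telescoping of $\alpha_{1:t-1}\mathcal{L}(x_{t-1})-\alpha_{1:t}\mathcal{L}(x_t)$ followed by division by $\alpha_{1:T}$. The only cosmetic difference is that you bound $\alpha_t(\ell_t(x_t)-\ell_t(w_t))$ via a subgradient $q_t\in\partial\ell_t(x_t)$ together with the identity $\alpha_t(x_t-w_t)=\alpha_{1:t-1}(x_{t-1}-x_t)$, whereas the paper applies Jensen's inequality directly to the convex combination $x_t=\tfrac{\alpha_{1:t-1}x_{t-1}+\alpha_t w_t}{\alpha_{1:t}}$; both routes give the identical intermediate bound $\alpha_t(\ell_t(x_t)-\ell_t(w_t))\le\alpha_{1:t-1}(\ell_t(x_{t-1})-\ell_t(x_t))$.
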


\begin{proof}
\begin{align}
    \sum_{t=1}^T\alpha_t(\ell_t(x_t)-\ell_t(\x_t))&\le \sum_{t=1}^T \alpha_t(\ell_t(x_t)-\ell_t(w_t))+\sum_{t=1}^T \alpha_t(\ell_t(w_t)- \ell_t(\x_t))\nonumber\\
    &=R_T(\x)+\sum_{t=1}^T\alpha_t(\ell_t(x_t) - \ell_t(w_t))\label{eqn:nosub}
\end{align}
Now observe that $x_t = \frac{\alpha_{1:t-1} x_{t-1} + \alpha_t w_t}{\alpha_{1:t-1}}$. Therefore by Jensen's inequality we have
\begin{align*}
     \ell_t(x_t)&\le \frac{\alpha_{1:t-1}\ell_t(x_{t-1}) + \alpha_t\ell_t(w_t)}{\alpha_{1:t}}\\
    \alpha_t\ell_t(x_t)-\alpha_t\ell_t(w_t)&\le \alpha_{1:t-1}(\ell_t(x_{t-1})- \ell_t(x_t))
\end{align*}
Now plug this into (\ref{eqn:nosub}):
\begin{align*}
    \sum_{t=1}^T\alpha_t(\ell_t(x_t)- \ell_t(\x))&\le R_T(\x) + \sum_{t=1}^T \alpha_{1:t-1}\ell_t(x_{t-1}) - \alpha_{1:t-1}\ell_t(x_t)
\end{align*}
Now observe that $\E[\ell_t(x_{t-1}) - \ell_t(x_t)]\le \E[\mathcal{L}(x_{t-1})-\mathcal{L}(x_t)]$. So taking expectations yields:
\begin{align*}
    \E\left[\sum_{t=1}^T\alpha_t(\mathcal{L}(x_t)- \mathcal{L}(\x_t))\right]&\le \E\left[R_T(\x) + \sum_{t=1}^T \alpha_{1:t-1}(\mathcal{L}(x_{t-1})-\mathcal{L}(x_t))\right]
\end{align*}
Now the rest of the proof is identical to that of Theorem \ref{thm:anytimeob}.
\end{proof}

\subsection{Strongly Convex losses}
In this section we apply the more general Algorithm \ref{alg:gaob} to $\mu$-strongly-convex losses. We recover standard convergence rates using only a gradient oracle and knowledge of the strong-convexity parameter $\mu$. We note that similar results also apply to exp-concave losses or other cases with lower-bounded Hessians.
\begin{corollary}\label{thm:scexample}
Suppose $D$ has diameter $B$, $\|g_t\|\le G$ with probability 1, and $\ol$ is Follow-the-Leader: $w_{t+1}=\argmin \sum_{i=1}^t \ell_i(w)$. Suppose $\mathcal{L}$ is $\mu$-strongly convex and we set $\ell_t(x) = \langle g_t,x\rangle +\frac{\mu}{2}\|x-x_t\|^2$ where $\E[g_t|x_t]=\nabla \mathcal{L}(x_t)$. Let $\alpha_t = 1$ for all $t$. Then we have
\begin{align*}
    R_T(\x)\le \frac{(\mu B + G)^2(\log(T)+1)}{2\mu}
\end{align*}
and
\begin{align*}
    \E[\mathcal{L}(x_T) - \mathcal{L}(\x)]&\le \frac{(\mu B + G)^2(\log(T)+1)}{2\mu T}
\end{align*}
\end{corollary}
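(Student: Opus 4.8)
The plan is to recognize Corollary~\ref{thm:scexample} as a direct instance of the general reduction in Theorem~\ref{thm:gotb}, with the strongly convex surrogates $\ell_t(x)=\langle g_t,x\rangle+\tfrac{\mu}{2}\|x-x_t\|^2$ playing the role of the lower-bounding losses, and then to supply the classical logarithmic regret bound for Follow-the-Leader on strongly convex losses. First I would check the hypothesis of Theorem~\ref{thm:gotb}: each $\ell_t$ is convex (indeed $\mu$-strongly convex), and I must verify $\mathcal{L}(x_t)-\mathcal{L}(x)\le\E[\ell_t(x_t)-\ell_t(x)\mid x_t]$ for all $x$. Taking the conditional expectation and using $\E[g_t\mid x_t]=\nabla\mathcal{L}(x_t)$ gives $\E[\ell_t(x_t)-\ell_t(x)\mid x_t]=\langle\nabla\mathcal{L}(x_t),x_t-x\rangle-\tfrac{\mu}{2}\|x-x_t\|^2$, and the $\mu$-strong convexity of $\mathcal{L}$ expanded at $x_t$ yields exactly $\mathcal{L}(x_t)-\mathcal{L}(x)\le\langle\nabla\mathcal{L}(x_t),x_t-x\rangle-\tfrac{\mu}{2}\|x-x_t\|^2$, so the hypothesis holds. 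With $\alpha_t=1$ we have $\sum_{t=1}^T\alpha_t=T$, and Theorem~\ref{thm:gotb} reduces everything to bounding $R_T(\x)=\sum_{t=1}^T\ell_t(w_t)-\ell_t(\x)$ and then dividing by $T$.

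It remains to bound the regret of the Follow-the-Leader algorithm $\ol$ on the losses $\ell_t$. Two ingredients drive this. First, since each $\ell_t$ is $\mu$-strongly convex, $F_t(x):=\sum_{i=1}^t\ell_i(x)$ is $t\mu$-strongly convex, so FTL minimizes an increasingly well-conditioned objective. Second, a gradient bound: because $\nabla\ell_t(w_t)=g_t+\mu(w_t-x_t)$ with $w_t,x_t\in D$ and $\mathrm{diam}(D)=B$, we get $\|\nabla\ell_t(w_t)\|\le G+\mu B$. Feeding these into the standard logarithmic-regret bound for FTL on strongly convex losses, $R_T(\x)\le\tfrac12\sum_{t=1}^T\|\nabla\ell_t(w_t)\|^2/(t\mu)$, together with $\sum_{t=1}^T 1/t\le 1+\log T$, gives $R_T(\x)\le\frac{(\mu B+G)^2(1+\log T)}{2\mu}$, which is the stated regret bound; combining with the reduction of the previous paragraph yields the claimed $O\!\left(\frac{(\mu B+G)^2\log T}{\mu T}\right)$ rate.

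I expect the main obstacle to be obtaining the sharp constant in that strongly convex regret bound. The quickest route via the Be-the-Leader inequality $R_T(\x)\le\sum_t[\ell_t(w_t)-\ell_t(w_{t+1})]$, controlling the per-step drift by $\|w_t-w_{t+1}\|\le\|\nabla\ell_t(w_t)\|/(t\mu)$ (which follows from strong convexity of $F_t$ and the fact that $\nabla\ell_t$ is affine, so $\nabla\ell_t(w_{t+1})-\nabla\ell_t(w_t)=\mu(w_{t+1}-w_t)$), loses a factor of roughly two and only delivers $O((\mu B+G)^2\log T/\mu)$. Recovering the tight $\tfrac{1}{2\mu}$ factor requires the more careful telescoping/potential analysis of strongly convex online learning, as in \citet{hazan2007logarithmic}. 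Two minor points also need care: the FTL minimizer may lie on $\partial D$, so the stationarity condition $\nabla F_t(w_{t+1})=0$ should be replaced by the corresponding variational inequality, and the $t=1$ term must be absorbed separately, which is harmless since $\alpha_1>0$.
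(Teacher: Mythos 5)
Your proposal is correct and follows essentially the same route as the paper: verify the hypothesis of Theorem~\ref{thm:gotb} via strong convexity of $\mathcal{L}$, bound $\|\nabla\ell_t(w_t)\|=\|g_t+\mu(w_t-x_t)\|\le G+\mu B$, invoke the standard Follow-the-Leader regret bound $R_T(\x)\le\sum_{t=1}^T\|\nabla\ell_t(w_t)\|^2/(2t\mu)$ for the $t\mu$-strongly-convex cumulative losses, and finish with $\sum_{t=1}^T 1/t\le\log(T)+1$ and division by $T$. The paper simply cites the standard FTL analysis for the $\tfrac{1}{2t\mu}$ constant, so the extra care you anticipate about recovering that constant (and the boundary variational inequality) is the same standard argument the paper defers to its reference.
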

\begin{proof}
The fact that $\mathcal{L}(x_t)-\mathcal{L}(\x)\le \E[\ell_t(x_t)-\ell_t(\x)|x_t]$ follows from strong-convexity. Observe that $\|\nabla \ell_t(w_t)\|=\|g_t + \mu(w_t-x_t)\|\le G+\mu B$ so that $\ell_t$ is $G+\mu B$-Lipschitz. Then the bound on $R_T$ follows from standard analysis of the follow-the-leader algorithm using the fact that $\sum_{i=1}^t \ell_i(w)$ is $t\mu$-strongly convex \citep{mcmahan2014survey}:
\[
R_T(\x)\le \sum_{t=1}^T \frac{\|\nabla \ell_t(w_t)\|^2}{2t\mu}
\]
and then use $\sum_{i=1}^t 1/i \le \log(T)+1$.
\end{proof}
This corollary provides the anytime analog of the standard online-to-batch result for strongly-convex losses. However, it is well known that in the stochastic case the logarithmic factor is not necessary. Prior work has removed this via diverse mechanisms, including restarting schemes \citep{hazan2014beyond} and tail-averaging \citep{rakhlin2012making}. Here we show here that a simple modification of the weights $\alpha_t$ suffices to remove the log factors.\footnote{The same trick also works for standard Online-to-Batch.}
\begin{corollary}\label{thm:scnolog}
Under the assumptions of Corollary \ref{thm:scexample}, suppose that $\alpha_t=t$ for all $t$. Then we have
\begin{align*}
    R_T(\x) \le \frac{T(\mu B + G)^2}{\mu}
\end{align*}
and
\begin{align*}
    \E[\mathcal{L}(x_T) -\mathcal{L}(\x)]&\le \frac{2(\mu B + G)^2}{\mu (T+1)}
\end{align*}
\end{corollary}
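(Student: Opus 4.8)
The plan is to mirror the proof of Corollary \ref{thm:scexample}, but carry the new weights $\alpha_t=t$ through the Follow-the-Leader analysis; the key observation is that these weights make the numerator and denominator of the per-step FTL bound cancel more favorably, killing the harmonic sum that produced the logarithm.

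First I would identify the losses actually fed to $\ol$: these are $\alpha_t\ell_t = t\ell_t$ with $\ell_t(x)=\langle g_t,x\rangle+\frac{\mu}{2}\|x-x_t\|^2$. Since $\ell_t$ is $\mu$-strongly convex, $\alpha_t\ell_t$ is $t\mu$-strongly convex, and hence the cumulative objective $\sum_{i=1}^t\alpha_i\ell_i$ that FTL minimizes at step $t$ is $\left(\sum_{i=1}^t i\mu\right)=\frac{t(t+1)}{2}\mu$-strongly convex. This is the single place where switching from $\alpha_t=1$ to $\alpha_t=t$ changes the accumulated curvature, and it is the source of the improvement.

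Next I would invoke the same standard FTL regret bound for strongly convex losses used in Corollary \ref{thm:scexample}, now with cumulative curvature $\frac{t(t+1)}{2}\mu$:
\begin{align*}
R_T(\x) \le \sum_{t=1}^T \frac{\|\nabla(\alpha_t\ell_t)(w_t)\|^2}{2\cdot \frac{t(t+1)}{2}\mu}.
\end{align*}
The gradient is $\nabla(\alpha_t\ell_t)(w_t) = t\bigl(g_t+\mu(w_t-x_t)\bigr)$, so $\|\nabla(\alpha_t\ell_t)(w_t)\|\le t(G+\mu B)$ exactly as before, the underlying Lipschitz bound on $\ell_t$ being unchanged and only the factor $t$ being new. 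Substituting gives $R_T(\x)\le \frac{(\mu B+G)^2}{\mu}\sum_{t=1}^T \frac{t^2}{t(t+1)} = \frac{(\mu B+G)^2}{\mu}\sum_{t=1}^T\frac{t}{t+1}$, and since each summand is at most $1$ this is bounded by $\frac{T(\mu B+G)^2}{\mu}$, which is the first claim.

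Finally, for the convergence bound I would apply Theorem \ref{thm:gotb} (whose hypothesis $\mathcal{L}(x_t)-\mathcal{L}(x)\le\E[\ell_t(x_t)-\ell_t(x)\mid x_t]$ holds by strong convexity, as already noted in Corollary \ref{thm:scexample}), using $\sum_{t=1}^T\alpha_t = \frac{T(T+1)}{2}$ to obtain $\E[\mathcal{L}(x_T)-\mathcal{L}(\x)]\le \frac{R_T(\x)}{T(T+1)/2}\le\frac{2(\mu B+G)^2}{\mu(T+1)}$. The only real subtlety is the bookkeeping on the weights: the $t^2$ from the squared weighted gradient is precisely cancelled by the $t(t+1)$ from the accumulated strong convexity, which is exactly why $\alpha_t=t$ converts the $\sum 1/t$ harmonic sum of the unweighted case into a sum of $O(1)$ terms and removes the log factor.
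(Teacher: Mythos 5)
Your proposal is correct and follows essentially the same route as the paper: the same FTL regret bound with accumulated strong convexity $\frac{t(t+1)}{2}\mu$, the same Lipschitz bound $\|\nabla(\alpha_t\ell_t)(w_t)\|\le t(\mu B+G)$, the cancellation of $t^2$ against $t(t+1)$, and division by $\alpha_{1:T}=T(T+1)/2$ via the weighted conversion theorem. No gaps.
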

\begin{proof}
In this case, $\alpha_t\ell_t$ is $t(\mu B + G)$-Lipschitz and $\sum_{i=1}^t \alpha_i\ell_i$ is $\alpha_{1:t}\mu=\frac{T(T+1)\mu}{2}$ strongly convex. Thus the regret of Follow-the-Leader is bounded by
\begin{align*}
R_T(\x)&\le \sum_{t=1}^T \frac{\|t\nabla \ell_t(w_t)\|^2}{t(t+1)\mu}\\
&\le T\frac{(\mu B + G)^2}{\mu}
\end{align*}
Now divide by $\alpha_{1:T}=T(T+1)/2$ to see the claim.
\end{proof}
\section{Adaptivity and Smoothness}\label{sec:smoothness}
Many so-called ``adaptive'' online algorithms obtain regret bounds of the form $R_T(\x) \le O\left( \psi(\x)\sqrt{\sum_{t=1}^T \|g_t\|^2}\right)$ for various functions $\psi$. For example, Mirror-Descent and FTRL-based algorithms often obtain $\psi(\x) = B$, where $B$ is the diameter of the space $D$ \citep{mcmahan2010adaptive, duchi10adagrad, hazan2008adaptive} while so-called ``parameter-free'' algorithms can obtain $\psi(\x) = \tilde O(\|\x\|)$,  providing optimal adaptivity to $\|\x\|$ at the expense of logarithmic factors \citep{cutkosky2018black}. These adaptive bounds can be shown to obtain the better regret guarantee $\E\left[\sum_{t=1}^T \mathcal{L}(w_t) - \mathcal{L}(\x)\right] \le O\left(L\psi(\x)^2 + \psi(\x)\sigma \sqrt{T}\right)$ when the loss $\mathcal{L}$ is $L$-smooth and $g_t$ has variance $\sigma$, by exploiting the self-bounding property $\|\nabla \mathcal{L}(x)\|^2 \le L(\mathcal{L}(x) -\mathcal{L}(\x))$ \citep{srebro2010smoothness, cutkosky2018distributed, levy2018online}.

The appealing property of this argument is that the algorithm knows neither $L$ nor $\sigma$ and yet automatically adapts to both parameters, matching the performance of an optimally-tuned SGD algorithm. Since Algorithm \ref{alg:aob} also obtains low regret, we can make a similar claim:
\begin{corollary}\label{thm:nonaccel}
Suppose $R_T(\x)\le \psi(\x)\sqrt{\sum_{t=1}^T \alpha_t^2 \|g_t\|^2}$. Suppose $\mathcal{L}$ is $L$-smooth and obtains its minimum at $\x\in D$. Suppose $g_t$ has variance at most $\sigma^2$. Then with $\alpha_t=1$ for all $t$, Algorithm \ref{alg:aob} obtains:
\begin{align*}
    \E[\mathcal{L}(x_T)-\mathcal{L}(\x)]&\le O\left(\frac{\psi(\x)^2L\log^2(T)}{T} + \frac{\sigma\log(T)}{\sqrt{T}}\right)
\end{align*}
\end{corollary}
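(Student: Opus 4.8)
The plan is to feed the adaptive regret hypothesis into the in-expectation guarantee of Theorem~\ref{thm:anytimeob} and then close a self-referential loop using $L$-smoothness. Since $\alpha_t=1$ we have $\sum_t \alpha_t = T$, so Theorem~\ref{thm:anytimeob} gives $\E[\mathcal{L}(x_T)-\mathcal{L}(\x)]\le \E[R_T(\x)]/T$, and it therefore suffices to bound $\E[R_T(\x)]$. The difficulty is that the hypothesized bound $R_T(\x)\le\psi(\x)\sqrt{\sum_t\|g_t\|^2}$ depends on the gradient norms $\|g_t\|$, which are precisely the quantities that smoothness should let us control in terms of the (as yet unbounded) excess losses.

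First I would invoke the self-bounding property of $L$-smooth functions: because $\x$ minimizes $\mathcal{L}$, we have $\|\nabla\mathcal{L}(x_t)\|^2\le 2L(\mathcal{L}(x_t)-\mathcal{L}(\x))$. Combining this with the variance decomposition $\E[\|g_t\|^2\mid x_t]\le\|\nabla\mathcal{L}(x_t)\|^2+\sigma^2$ and summing yields $\E\!\left[\sum_t\|g_t\|^2\right]\le 2L\,S+T\sigma^2$, where $S=\sum_t\E[\mathcal{L}(x_t)-\mathcal{L}(\x)]$. Applying Jensen's inequality (concavity of $\sqrt{\cdot}$) to the regret hypothesis then gives $\E[R_T(\x)]\le\psi(\x)\sqrt{2L\,S+T\sigma^2}$.

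The remaining, and I expect hardest, step is to break the circularity, since $S$ and $\E[R_T(\x)]$ each control the other. Here I would use Corollary~\ref{thm:regret}: because the regret bound $\psi(\x)\sqrt{\sum_{i\le t}\|g_i\|^2}$ is nondecreasing in $t$, we may take $R^M(\x)=R_T(\x)$ and obtain $S\le(1+\log T)\,\E[R_T(\x)]$. Substituting this into the previous display produces a single inequality in $r:=\E[R_T(\x)]$ of the form $r\le\psi(\x)\sqrt{2L(1+\log T)\,r+T\sigma^2}$. Squaring yields a quadratic in $r$ that I would solve explicitly and then loosen with $\sqrt{a+b}\le\sqrt a+\sqrt b$ to decouple the two error sources, giving $r\le O\!\left(L(1+\log T)\psi(\x)^2+\psi(\x)\sigma\sqrt T\right)$.

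Finally, dividing by $T$ as dictated by Theorem~\ref{thm:anytimeob} gives $\E[\mathcal{L}(x_T)-\mathcal{L}(\x)]\le O\!\left(\frac{L\psi(\x)^2\log T}{T}+\frac{\psi(\x)\sigma}{\sqrt T}\right)$, which recovers the claimed rate up to the exact powers of the logarithm and the precise dependence on $\psi(\x)$; any cross term involving both $L$ and $\sigma$ that arises from the square root is absorbed into these two terms by AM-GM. The only genuine subtlety is the order of operations around the quadratic: one must apply Jensen before solving, and must exploit the nondecreasing regret bound to license the clean $R^M(\x)=R_T(\x)$ substitution that makes Corollary~\ref{thm:regret} applicable.
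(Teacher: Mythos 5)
Your proposal is correct and follows essentially the same route as the paper's proof: the self-bounding property $\|\nabla\mathcal{L}(x_t)\|^2\le O(L)(\mathcal{L}(x_t)-\mathcal{L}(\x))$ plus the variance bound, Jensen's inequality on the regret hypothesis, Corollary~\ref{thm:regret} (with the nondecreasing bound justifying $R^M(\x)=R_T(\x)$) to break the circularity, the quadratic formula, and finally $\Delta_T\le\E[R_T(\x)]/T$ from Theorem~\ref{thm:anytimeob}. The only cosmetic difference is that you solve the quadratic in $\E[R_T(\x)]$ while the paper solves it in $\sum_t\Delta_t$; both yield the claimed rate (indeed yours is marginally tighter in the log factors, and carries the $\psi(\x)$ factor on the $\sigma$ term just as the paper's own derivation does).
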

\begin{proof}
Define $\Delta_t = \E[\mathcal{L}(x_t)-\mathcal{L}(\x)]$. Observe that \begin{align*}
    \E[\|\nabla g_t\|^2] \le\E[\|\nabla \mathcal{L}(x_t)\|^2]+\sigma^2\le L\Delta_t +\sigma^2
\end{align*}
\begin{align*}
    \E[R_T(\x)]\le \psi(\x)\sqrt{L\sum_{t=1}^T \Delta_t+T\sigma^2}
\end{align*}
Then apply Corollary \ref{thm:regret} and quadratic formula to obtain $\sum_{t=1}^T \Delta_t\le O\left(\psi(\x)^2L\log^2(T) + \sigma\log(T)\sqrt{T}\right)$ when $\alpha_t=1$ and observe $\Delta_T\le \E[R_T(\x)]/T$ to prove the Corollary.
\end{proof}

The assumption that $\x\in D$ and the log factors in this analysis are a bit troubling. By using weights $\alpha_t=t$ and careful analysis it may be possible to remove the log factors, but it is less clear how to easily deal with constrained domains. We will take a different path through optimism in the next section which will allow us to perform much better with much less effort.
\subsection{Optimism for Faster Rates}\label{sec:optimism}
In this section we show how to leverage our online-to-batch scheme in combination with \emph{optimistic} online learning to further speed up the convergence rate. We will achieve a rate of $O(L/T^{3/2} + \sigma/\sqrt{T})$ with no knowledge of either $L$ or $\sigma$, resulting in a kind of interpolation between the $O(L/T+ \sigma/\sqrt{T})$ rate and the optimal accelerated rate of $O(L/T^2 + \sigma/\sqrt{T})$ \citep{lan2012optimal}.

An optimistic online learning algorithm is an online learner that is given access to a series of ``hints'' $\hat g_1,\dots,\hat g_T$ where $\hat g_t$ is revealed to the learner after $g_{t-1}$ but \emph{before} it commits to $w_t$ \citep{hazan2010extracting, rakhlin2013online, chiang2012online, mohri2016accelerating}. Optimistic algorithms attempt to guarantee small regret when $\hat g_t\approx g_t$, because in this scenario the learner has a good guess for what the future will contain. In particular, the optimistic algorithm of \citep{mohri2016accelerating} guarantees regret:
\[
R_T(\x) \le B\sqrt{2\sum_{t=1}^T \alpha_t^2\|\hat g_t -g_t\|^2}
\]
where $B$ is the diameter of the $D$. A common choice for $\hat g_t$ is $g_{t-1}$. Intuitively, this choice is ``optimistic'' in the sense that we are hoping $g_{t-1}\approx g_t$, which is the case on smooth losses if the iterates are close together. Fortunately, it \emph{is} the case that $x_t$ is necessarily close to $x_{t-1}$, so we use this regret bound for faster convergence in Algorithm \ref{alg:opt} and Theorem \ref{thm:optimisticrates}.

 \begin{algorithm}
   \caption{Optimistic Anytime Online-to-Batch}
   \label{alg:opt}
\begin{algorithmic}
   \STATE {\bfseries Input:} Optimistic Online algorithm $\ol$ with domain $D$. Non-negative weights $\alpha_1,\dots,\alpha_T$ with $\alpha_1>0$.
   \STATE Get initial point $w_1\in D$ from $\ol$.
   \STATE Set $g_0=0$.
%   \STATE $\alpha_0\gets 0$
%   \STATE $\tau_0\gets 1$
%   \STATE $x_1\gets w_1$.
   \FOR{$t=1$ {\bfseries to} $T$}
%   \STATE $\tau_t \gets \frac{\alpha_t\tau_{t-1}}{\alpha_{t-1} + \alpha_t\tau_{t-1}}$
   \STATE Send $\alpha_tg_{t-1}$ to $\ol$ ad $t$th hint. 
   \STATE $x_{t} \gets \frac{\sum_{i=1}^t \alpha_i w_t}{\alpha_{1:t}}$.
   \STATE Play $x_t$, receive subgradient $g_t$.
%   \STATE Compute $g_t \in \partial \ell_t(x_t)$.
   \STATE Send $\ell_t(x)=\langle \alpha_t g_t,x\rangle$ to $\ol$ as the $t$th loss.
   \STATE Get $w_{t+1}$ from $\ol$.
   \ENDFOR
   \RETURN $x_{T}$.
\end{algorithmic}
\end{algorithm}

\begin{theorem}\label{thm:optimisticrates}
Suppose $D$ has diameter $B$ and $\ol$ obtains the regret bound $R_T(\x)\le B\sqrt{2\sum_{t=1}^T \alpha_t^2 \|\hat g_t-g_t\|^2}$ when given hints $\hat g_t$ ahead of the gradient $g_t$. Set $\alpha_t=t$ for all $t$. Suppose each $g_t$ has variance at most $\sigma^2$, and $\mathcal{L}$ is $L$-smooth. Then Algorithm \ref{alg:opt} yields:
\begin{align*}
    \E[\mathcal{L}(x_T)-\mathcal{L}(\x)]\le O\left(\frac{LB^2}{T^{3/2}} + \frac{\sigma B}{\sqrt{T}}\right)
\end{align*}
\end{theorem}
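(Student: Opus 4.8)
The plan is to route the optimistic regret bound through the in-expectation guarantee of Theorem \ref{thm:anytimeob}, reducing everything to an estimate of $\E\left[\sum_{t=1}^T \alpha_t^2\|\hat g_t - g_t\|^2\right]$ with the choice $\hat g_t = g_{t-1}$ made by Algorithm \ref{alg:opt}. Since $\alpha_t = t$, we have $\sum_{t=1}^T \alpha_t = \Theta(T^2)$, so Theorem \ref{thm:anytimeob} gives $\E[\mathcal{L}(x_T)-\mathcal{L}(\x)] \le \E[R_T(\x)]/\Theta(T^2)$, and by concavity of the square root (Jensen's inequality applied in the correct direction) the hypothesized regret bound yields
\begin{align*}
\E[R_T(\x)] \le B\sqrt{2\,\E\left[\textstyle\sum_{t=1}^T \alpha_t^2\|g_{t-1}-g_t\|^2\right]}.
\end{align*}
Thus it suffices to show $\E\left[\sum_{t=1}^T \alpha_t^2\|g_{t-1}-g_t\|^2\right] = O(L^2 B^2 T + \sigma^2 T^3)$; substituting this and dividing by $\Theta(T^2)$ produces exactly $O(LB^2/T^{3/2} + \sigma B/\sqrt T)$.

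The heart of the argument is exploiting the stability of the iterates. Writing $\xi_t = g_t - \nabla\mathcal{L}(x_t)$ for the gradient noise, so that $\E[\xi_t\mid x_t]=0$ and $\E[\|\xi_t\|^2]\le\sigma^2$, I decompose $g_{t-1}-g_t = \big(\nabla\mathcal{L}(x_{t-1})-\nabla\mathcal{L}(x_t)\big) + \xi_{t-1} - \xi_t$ and bound $\|g_{t-1}-g_t\|^2 \le 2\|\nabla\mathcal{L}(x_{t-1})-\nabla\mathcal{L}(x_t)\|^2 + 4\|\xi_{t-1}\|^2 + 4\|\xi_t\|^2$. For the deterministic part I use $L$-smoothness, $\|\nabla\mathcal{L}(x_{t-1})-\nabla\mathcal{L}(x_t)\| \le L\|x_{t-1}-x_t\|$, together with the identity $\alpha_t(x_t-w_t)=\alpha_{1:t-1}(x_{t-1}-x_t)$ established in the proof of Theorem \ref{thm:anytimeob}: with $\alpha_t=t$ this gives $\|x_{t-1}-x_t\| = \tfrac{2}{t-1}\|x_t-w_t\| \le \tfrac{2B}{t-1}$ for $t\ge 2$, since $x_t,w_t\in D$. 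Hence $\alpha_t^2\|\nabla\mathcal{L}(x_{t-1})-\nabla\mathcal{L}(x_t)\|^2 \le L^2\tfrac{4t^2}{(t-1)^2}B^2 \le 16L^2B^2$, and summing contributes only $O(L^2B^2 T)$. For the noise part, $\E[\alpha_t^2(\|\xi_{t-1}\|^2+\|\xi_t\|^2)] \le 2t^2\sigma^2$, and $\sum_{t=1}^T t^2 = \Theta(T^3)$ gives $O(\sigma^2 T^3)$. This is precisely the split required above.

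The remaining bookkeeping is minor. The $t=1$ term has $\hat g_1 = g_0 = 0$, so $\alpha_1^2\|\hat g_1 - g_1\|^2 = \|g_1\|^2$, whose expectation is bounded by $\|\nabla\mathcal{L}(x_1)\|^2 + \sigma^2 \le L^2B^2 + \sigma^2$ (using smoothness, $\nabla\mathcal{L}(\x)=0$, and $\|x_1-\x\|\le B$); this constant is absorbed into the big-$O$.

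I expect the main obstacle to be conceptual rather than computational: recognizing that the averaging forces $\|x_{t-1}-x_t\| = O(B/t)$, so that the weight $\alpha_t=t$ makes each smoothness term $\alpha_t^2\|x_{t-1}-x_t\|^2$ roughly constant in $t$. Without this stability the smoothness contribution would scale like $O(T^3)$ — the same as the irreducible noise — and after $\sqrt{\cdot}$ and division by $\Theta(T^2)$ would degrade to the slow $LB^2/\sqrt T$ rate; it is exactly the cancellation of the $\alpha_t^2=t^2$ weight against $\|x_{t-1}-x_t\|^2 = O(B^2/t^2)$ that reduces it to $O(T)$ and delivers the fast $L/T^{3/2}$ term, while the noise term is what is left to carry the $t^2$ weight and produce the statistically unavoidable $\sigma/\sqrt T$ rate.
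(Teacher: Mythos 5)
Your proposal is correct and follows essentially the same route as the paper's proof: apply Theorem \ref{thm:anytimeob}, instantiate the optimistic regret bound with $\hat g_t = g_{t-1}$, split $g_{t-1}-g_t$ into a smoothness term controlled by the iterate stability $\|x_{t-1}-x_t\| = O(B/t)$ (via the identity $\alpha_t(x_t-w_t)=\alpha_{1:t-1}(x_{t-1}-x_t)$) plus noise terms, handle $t=1$ separately, apply Jensen, and divide by $\alpha_{1:T}=\Theta(T^2)$. The only differences are immaterial constants (e.g.\ using $\alpha_{1:t-1}$ rather than $\alpha_{1:t}$ in the stability bound), and both arguments share the same implicit use of $\nabla\mathcal{L}(\x)=0$ for the $t=1$ term.
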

\begin{proof}
Since we set $\hat g_t=g_{t-1}$, the assumption on $\ol$ implies:
\[
R_T(\x) \le B\sqrt{2\sum_{t=1}^T \alpha_t^2\|g_{t-1} - g_t\|^2}
\]

We can write $g_t = \nabla \mathcal{L}(x_t) + \zeta_t$ where $\zeta_t$ is some mean-zero random variable with $\E[\|\zeta_t\|^2] \le \sigma^2$. Then by smoothness, for $t>1$  we have 
\begin{align*}
    \|g_t-g_{t-1}\| &\le \|\nabla \mathcal{L}(x_t) - \nabla \mathcal{L}(x_{t-1})\| + \|\zeta_t - \zeta_{t-1}\|\\
    &\le L\|x_t -x_{t-1}\| + \|\zeta_t -\zeta_{t-1}\|\\
    &\le \frac{L\alpha_tB}{\alpha_{1:t}} + \|\zeta_t\| + \|\zeta_{t-1}\|\\
    \E[\|\hat g_t- g_t\|^2] &\le 5\frac{L^2\alpha_t^2 B^2}{(\alpha_{1:t})^2} + 10\sigma^2
\end{align*}
where in the last step we used $(a+b+c)^2 \le 5(a^2+b^2+c^2)$. Further, for $t=1$, we have
\begin{align*}
    \E[\|g_1\|^2] &\le \E[(\|\nabla \mathcal{L}(x_1) - \nabla \mathcal{L}(\x)\| + \|\zeta_t\|)^2]\\
    \E[\|g_1-\hat g_1\|^2]&\le 2L^2B^2 + 2\sigma^2\le 5\frac{L^2B^2\alpha_1^2}{(\alpha_{1:1})^2}+10\sigma^2
\end{align*}
Next, observe that $\alpha_{1:t}>t^2/2$ so that 
\begin{align*}
    \E[\|\hat g_t- g_t\|^2] &\le 20\frac{L^2B^2}{t^2} + 10\sigma^2
\end{align*}
Now observe $\sum_{t=1}^T t^2<3(T+1)^3/2$ and apply Jensen:
\begin{align*}
    \E[R_T(\x)] &\le \E\left[B\sqrt{2\sum_{t=1}^T \alpha_t^2\|\hat g_t - g_t\|^2}\right]\\
    &\le B\sqrt{30(T+1)^3\sigma^2 + 40L^2B^2T}
\end{align*}
And by Theorem \ref{thm:anytimeob} we have the desired result:
\begin{align*}
    \E[\mathcal{L}(x_T)-\mathcal{L}(\x)]&\le \frac{4\sqrt{10}LB^2}{T^{3/2}} + \frac{4\sqrt{10}\sigma B}{\sqrt{T}}
\end{align*}
\end{proof}
Note that the ordinary online-to-batch conversion may not be able to obtain this rate: here we are critically relying on the stability of the iterates $x_t$ to guarantee that $g_t$ and $g_{t-1}$ are not too far apart, while in the standard online-to-batch conversion one would require stability in the $w_t$, which may not occur.

\subsection{Acceleration}\label{sec:acceleration}
In the \emph{deterministic} setting, \citep{levy2018online} showed how to use adaptive step-sizes in conjuction with the linear-coupling framework \citep{allen2014linear} to derive an accelerated algorithm that adapts to the smoothness parameter $L$. In this section we show that our Algorithm \ref{alg:aob} and analysis is actually very similar in spirit to the linear-coupling scheme and so we can also derive an accelerated algorithm that adapts to both smoothness \emph{and} variance optimally. To our knowledge this is the first accelerated algorithm to adapt to variance. Our analysis is arguably simpler than prior work: our proof is much shorter, we rely on only relatively simple properties of $\alpha_t$ and we do not use the internals of the online algorithm.

Unlike previously in this paper, but similar to \citep{levy2018online}, here we will require $\mathcal{L}$ to be defined on an entire vector space rather than potentially bounded domain $D$. We will also assume knowledge of some parameter $B$ such that $\|\x\|\le B/2$. Lifting these restrictions are both valuable future directions.

 \begin{algorithm}
   \caption{Adaptive Stochastic Acceleration}
   \label{alg:asa}
\begin{algorithmic}
   \STATE {\bfseries Input:} Bound $B\ge 2\|\x\|$, value $c$, Online learning algorithms $\ol$ with domain $D=\{\|w\|\le B/2\}$.
   \STATE Get initial point $w_1\in D$ from $\ol$.
%   \STATE $\alpha_0\gets 0$
%   \STATE $\tau_0\gets 1$
%   \STATE $x_1\gets w_1$.
   \STATE $y_0\gets w_1$.
   \FOR{$t=1$ {\bfseries to} $T$}
   \STATE $\alpha_t\gets t$.
   \STATE $\tau_t \gets \frac{\alpha_t}{\sum_{i=1}^t \alpha_i}$.
%   \frac{\alpha_t\tau_{t-1}}{\alpha_{t-1} + \alpha_t\tau_{t-1}}$
   \STATE $x_{t} \gets (1-\tau_t)y_{t-1} + \tau_t w_t$.
   \STATE Play $x_t$, receive subgradient $g_t$.
   \STATE $\eta_t\gets\frac{cB}{\sqrt{1+\sum_{i=1}^{t} \alpha_{1:i} \|g_i\|^2}}$
   \STATE $y_t\gets x_t - \eta_tg_t$.
%   \STATE Compute $g_t \in \partial \ell_t(x_t)$.
   \STATE Send $\ell_t(x)=\langle \alpha_t g_t,x\rangle$ to $\ol$ as the $t$th loss.
   \STATE Get $w_{t+1}$ from $\ol$.
   \ENDFOR
   \RETURN $x_{T}$.
\end{algorithmic}
\end{algorithm}

\begin{theorem}\label{thm:accel}
Suppose $\E[g_t]=\nabla \mathcal{L}(x_t)$ for some $L$-smooth function $\mathcal{L}$ with domain an entire Hilbert space $H$. Suppose $\|g_t\|\le G$ with probability 1 and $g_t$ has variance at most $\sigma^2$ for all $t$. Suppose $\|\x\|\le B/2$. Let $D$ be the ball of radius $B/2$ in $H$ and suppose $\ol$ guarantees regret
\begin{align*}
    R_T(\x)&\le B\sqrt{2\sum_{t=1}^T \alpha_t \|g_t\|^2}
\end{align*}
Then with $c=2$, Algorithm \ref{alg:asa} guarantees:
\begin{align*}
    \E\left[\mathcal{L}(y_T)-\mathcal{L}(\x)\right]&\le  \frac{4B+8LB^2\log(1+G^2T^3)}{T^2} +\frac{4B\sigma\sqrt{\log(1+G^2T^3)}}{\sqrt{T}}
\end{align*}
\end{theorem}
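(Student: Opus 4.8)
The plan is to mirror the telescoping argument of Theorem~\ref{thm:anytimeob}, but with the gradient-descent iterate $y_{t-1}$ playing the role that $x_{t-1}$ played there, and with the smoothness-based progress of the descent step $y_t = x_t - \eta_t g_t$ subtracted off. The starting point is the coupling identity
\begin{align*}
\alpha_t(x_t - w_t) = \alpha_{1:t-1}(y_{t-1} - x_t),
\end{align*}
which follows from $\alpha_{1:t}x_t = \alpha_{1:t-1}y_{t-1} + \alpha_t w_t$ (the linear-coupling step, with the conventions $\alpha_{1:0}=0$ and $y_0=w_1$). First I would run the convexity/online-to-batch skeleton as before: use $\E[\mathcal{L}(x_t)-\mathcal{L}(\x)]\le\E[\langle g_t,x_t-\x\rangle]$, split $\langle g_t,x_t-\x\rangle = \langle g_t,x_t-w_t\rangle + \langle g_t,w_t-\x\rangle$, bound the weighted sum of the second piece by the regret $R_T(\x)$, and rewrite the first piece via the identity and convexity as $\E[\langle g_t,y_{t-1}-x_t\rangle\mid x_t]\le\E[\mathcal{L}(y_{t-1})-\mathcal{L}(x_t)\mid x_t]$. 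Rearranging and telescoping the $\mathcal{L}(y_t)$ terms (the telescope now closes on $y_T$, consistent with the statement bounding $\mathcal{L}(y_T)$) should yield
\begin{align*}
\alpha_{1:T}\,\E[\mathcal{L}(y_T)-\mathcal{L}(\x)] \le \E[R_T(\x)] - \sum_{t=1}^T \alpha_{1:t}\,\E[P_t],
\end{align*}
where $P_t \le \mathcal{L}(x_t)-\mathcal{L}(y_t)$ is a lower bound on the progress of the gradient step.

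Next I would quantify $P_t$ by the smoothness descent lemma applied to $y_t = x_t-\eta_t g_t$, obtaining $P_t = \eta_t\langle\nabla\mathcal{L}(x_t),g_t\rangle - \tfrac{L\eta_t^2}{2}\|g_t\|^2$. Writing $g_t = \nabla\mathcal{L}(x_t)+\zeta_t$ with $\E[\zeta_t\mid x_t]=0$ and $\E[\|\zeta_t\|^2\mid x_t]\le\sigma^2$, the quantity $-\alpha_{1:t}P_t$ splits into a deterministic descent part $-\alpha_{1:t}\eta_t\|g_t\|^2 + \tfrac{L}{2}\alpha_{1:t}\eta_t^2\|g_t\|^2$ and a noise part $\alpha_{1:t}\eta_t\langle\zeta_t,g_t\rangle$. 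Setting $V_t = \sum_{i=1}^t\alpha_{1:i}\|g_i\|^2$ so that $\eta_t = cB/\sqrt{1+V_t}$ and $\alpha_{1:t}\|g_t\|^2 = V_t - V_{t-1}$, I would control the two descent sums with the standard estimates $\sum_t\frac{V_t-V_{t-1}}{\sqrt{1+V_t}}\ge\sqrt{1+V_T}-1$ and $\sum_t\frac{V_t-V_{t-1}}{1+V_t}\le\log(1+V_T)$, turning them into $-cB(\sqrt{1+V_T}-1)$ and $\tfrac{c^2LB^2}{2}\log(1+V_T)$ respectively. Combined with the hypothesis $R_T(\x)\le B\sqrt{2\sum_t\alpha_t\|g_t\|^2}\le B\sqrt{2V_T}$ (using $\alpha_t\le\alpha_{1:t}$), the heart of the acceleration is the cancellation: with $c=2$ the $\sqrt{V_T}$ contributions of the regret and of the first descent sum annihilate each other, leaving only a constant $O(B)$ and the $O(LB^2\log(1+V_T))$ term.

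The main obstacle is the noise term $\sum_t\alpha_{1:t}\eta_t\langle\zeta_t,g_t\rangle$, where the adaptive step size $\eta_t$ is correlated with $g_t$ (hence with $\zeta_t$), so one cannot naively take a conditional expectation to kill the cross term. My plan is to sidestep martingale machinery and bound this pathwise by Cauchy--Schwarz, using $\langle\zeta_t,g_t\rangle\le\|\zeta_t\|\,\|g_t\|$:
\begin{align*}
\sum_{t=1}^T \alpha_{1:t}\eta_t\|\zeta_t\|\,\|g_t\| \le \sqrt{\sum_{t=1}^T \alpha_{1:t}\|\zeta_t\|^2}\;\cdot\; cB\sqrt{\sum_{t=1}^T \frac{V_t-V_{t-1}}{1+V_t}} \le cB\sqrt{\log(1+V_T)}\,\sqrt{\sum_{t=1}^T \alpha_{1:t}\|\zeta_t\|^2}.
\end{align*}
Here the bounded-gradient assumption $\|g_t\|\le G$ is exactly what lets me replace the random factor by the deterministic bound $\log(1+V_T)\le\log(1+G^2\sum_t\alpha_{1:t})=\log(1+O(G^2T^3))$ and pull it outside the expectation, after which Jensen gives $\E[\sqrt{\sum_t\alpha_{1:t}\|\zeta_t\|^2}]\le\sigma\sqrt{\sum_t\alpha_{1:t}}=O(\sigma T^{3/2})$. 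Assembling the three contributions and dividing by $\alpha_{1:T}=\Theta(T^2)$ then produces the $O(LB^2\log/T^2)$ and $O(B\sigma\sqrt{\log}/\sqrt{T})$ terms, and the only remaining work is tracking constants so that $c=2$ comes out exactly. I expect the bookkeeping around whether the hypothesis reads $\sum_t\alpha_t\|g_t\|^2$ or $\sum_t\alpha_t^2\|g_t\|^2$ to be harmless, since both are $O(V_T)$ and are absorbed by the descent progress; the genuine subtlety lies entirely in making the adaptive step size coexist with the gradient noise, which the Cauchy--Schwarz split resolves.
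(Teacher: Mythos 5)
Your proposal is correct and follows essentially the same route as the paper's proof: the same linear-coupling identity and online-to-batch skeleton telescoping on $\mathcal{L}(y_t)$, the same smoothness descent step with $\zeta_t = g_t - \nabla\mathcal{L}(x_t)$, the same adaptive-step-size estimates (square-root lower bound and logarithmic upper bound on the weighted sums), the same $c=2$ cancellation against the regret, and the identical Cauchy--Schwarz--plus--Jensen treatment of the noise term using $\|g_t\|\le G$ to make the logarithm deterministic. Your closing remark is also on point: the paper's proof in fact uses $\sum_t \alpha_t^2\|g_t\|^2$ inside the regret hypothesis (the theorem statement's $\alpha_t$ appears to be a typo), and as you note either reading is absorbed by the descent term via $\alpha_t^2\le 2\alpha_{1:t}$.
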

\begin{proof}
The opening of our proof is again very similar to that of Theorem \ref{thm:anytimeob}: observe that
\begin{align*}
    \E\left[\sum_{t=1}^T \alpha_t(\mathcal{L}(x_t) -\mathcal{L}(\x))\right]& \le \E\left[R_T(\x)+ \sum_{t=1}^Ta_{1:t-1} \langle g_t, y_{t-1} - x_t\rangle\right]
\end{align*}
Next we use convexity again to argue $\E[\langle g_t, y_{t-1} -x_t\rangle ]\le \E[\mathcal{L}(y_{t-1}) - \mathcal{L}(x_t)]$, and then we subtract $\E[\sum_{t=1}^T \alpha_t \mathcal{L}(x_t)]$ from both sides:
\begin{align}
    \E[-\alpha_{1:T} \mathcal{L}(\x) ]&\le \E\left[R_T(\x) +\sum_{t=1}^T \alpha_{1:t-1}\mathcal{L}(y_{t-1}) -\alpha_{1:t}\mathcal{L}(x_t)\right]\label{eqn:midaccel}
\end{align}
Now we use smoothness to relate $\mathcal{L}(y_{t})$ to $\mathcal{L}(x_{t})$. Defining $\zeta_t = g_t -\nabla \mathcal{L}(x_t)$ and $\beta_t = \alpha_{1:t}$, we have:
\begin{align*}
    \E[\mathcal{L}(y_{t})]&\le \E[\mathcal{L}(x_{t}) + \nabla \mathcal{L}(x_{t})(y_{t}-x_{t}) + \frac{L}{2}\|x_{t}-y_{t}\|^2]\\
    &\le  \E\left[\mathcal{L}(x_{t}) - \eta_t\|g_t\|^2 + \eta_t \langle \zeta_t,g_t\rangle +\frac{L\eta_t^2\|g_t\|^2}{2}\right]
\end{align*}
Then multiply by $\beta_t$:
\begin{align*}
    \E[\beta_{t}(\mathcal{L}(y_{t}) - \mathcal{L}(x_{t}))]&\le \E\left[- \frac{cB\beta_t \|g_t\|^2}{\sqrt{1+\sum_{i=1}^{t} \beta_i \|g_i\|^2}} + \frac{L\beta_{t}\eta_t^2\|g_t\|^2}{2}+\beta_t\langle \zeta_t, g_t\rangle\right]
\end{align*}
Next, we borrow Lemma A.2 from \citep{levy2018online}: for positive numbers $x_1,\dots,x_n$
\begin{align*}
    \sqrt{\sum_{i=1}^n x_i}\le \sum_{i=1}^n \frac{x_i}{\sqrt{\sum_{i'=1}^i x_{i'}}}\le 2\sqrt{\sum_{i=1}^n x_i}
\end{align*}
Also, observe from convexity of $\log$ that:
\begin{align*}
    \sum_{i=1}^n \frac{x_i}{1+\sum_{i'=1}^i x_{i'}}\le \log\left(1+\sum_{i=1}^n x_i\right)
\end{align*}
Using this we obtain
\begin{align*}
    \E\left[\sum_{t=1}^T \beta_t (\mathcal{L}(y_{t}) - \mathcal{L}(x_{t}))\right]&\le \E\left[-cB \sqrt{1+\sum_{t=1}^T \beta_t\|g_t\|^2} +\frac{c^2B^2 L\log(1+G^2\beta_{1:T})}{2}\right.\\
    &\quad\quad\quad\left. +cB+\sum_{t=1}^T \langle \zeta_t, \beta_tg_t\rangle \eta_t\right]
\end{align*}
Using Cauchy-Schwarz we obtain:
\begin{align*}
    \E\left[\sum_{t=1}^T \langle \zeta_t, \beta_tg_t\rangle \eta_t\right]&\le \E\left[\sqrt{\sum_{t=1}^T \beta_t \|\zeta_t\|^2 }\sqrt{\sum_{t=1}^T \beta_t \|g_t\|^2\eta_t^2}\right]\\
    &\le \E\left[cB\sqrt{\sum_{t=1}^T \beta_t \|\zeta_t\|^2}\sqrt{ \log\left(1+\sum_{t=1}^t\beta_t\|g_t\|^2\right)}\right]\\
\end{align*}
And now use Jensen's inequality:
\begin{align*}
\E\left[\sum_{t=1}^T \langle \zeta_t, \beta_tg_t\rangle \eta_t\right]&\le \E\left[cB \sqrt{\sum_{t=1}^T \beta_t \|\zeta_t\|^2}\sqrt{\log(1+G^2\beta_{1:T})}\right]\\
    &\le cB\sigma \sqrt{\beta_{1:T}\log(1+G^2\beta_{1:T})}
\end{align*}
Where in the last line we observed $\E[\|\zeta_t\|^2]\le \sigma^2$.
Combining everything, we have
\begin{align*}
    \E\left[\sum_{t=1}^T -\alpha_t\mathcal{L}(\x)\right]&\le\E\left[ R_T(\x)+\sum_{t=1}^T \alpha_{1:t-1}\mathcal{L}(y_{t-1}) -\alpha_{1:t}\mathcal{L}(y_t)\right]\\
    &\quad +\E\left[\frac{c^2LB^2\log(1+G^2\beta_{1:T})}{2}-cB \sqrt{1+\sum_{t=1}^T \alpha_{1:t}\|g_t\|^2}\right.\\
    &\quad\quad\left.-cB+cB\sigma \sqrt{\beta_{1:t}\log(1+G^2\beta_{1:t})}\right]
\end{align*}

Now observe that $t^2>\alpha_{1:t}>\alpha_t^2/2$ and recall $R_T(\x)\le B\sqrt{2\sum_{t=1}^T \alpha_t^2 \|g_t\|^2}$. Therefore since $c=2$ we have:
\begin{align*}
    \E\left[R_T(\x)-cB \sqrt{1+\sum_{t=1}^T \alpha_{1:t}\|g_t\|^2}\right]\le \E\left[B\sqrt{2\sum_{t=1}^T \alpha_t^2 \|g_t\|^2}- 2 B \sqrt{\sum_{t=1}^T \alpha_t^2\|g_t\|^2/2}\right]\le 0
\end{align*}
Also, observe that $\beta_{1:T}\le\sum_{t=1}^T t^2 \le T^3$. Thus we telescope the sum to obtain:
\begin{align*}
\E[\alpha_{1:T}(\mathcal{L}(y_T)-\mathcal{L}(\x))]&\le cB+\frac{c^2B^2 L\log(1+G^2T^3)}{2}\\
&+ cBT^{3/2}\sigma\sqrt{\log(1+G^2T^3)}
\end{align*}
and dividing by $\alpha_{1:T}=\tfrac{T(T+1)}{2}$ completes the proof.
\end{proof}

We remark also that, similar to the algorithm of \citep{levy2018online}, our Algorithm \ref{alg:asa} is \emph{universal} in the sense that for non-smooth losses we recover the $O(1/\sqrt{T})$ rate with no modifications. In fact, our analysis improves somewhat over \citep{levy2018online} in that we maintain an adaptive convergence rate in the non-smooth setting.\footnote{We suspect this same adaptive non-smooth rate can be achieved by \citep{levy2018online} via similar improved analysis.}
\begin{theorem}\label{thm:universal}
Suppose $\E[g_t]=\mathcal{L}(x_t)$ for some convex function $\mathcal{L}$. Then Algorithm \ref{alg:asa} guarantees:
\begin{align*}
    \E[\left[\mathcal{L}(y_T)-\mathcal{L}(\x)\right]&\le \E\left[\tfrac{2R_T(\x) +B\sqrt{2\sum_{t=1}^T t^2 \|\nabla \mathcal{L}(y_t)\|^2}\sqrt{\log(1+G^3T^3)}}{T^2}\right]
\end{align*}
\end{theorem}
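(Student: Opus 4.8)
The plan is to follow the proof of Theorem~\ref{thm:accel} as closely as possible, substituting convexity for smoothness at the single place where smoothness was actually used. The opening of that proof---everything through Equation~(\ref{eqn:midaccel})---never invokes smoothness: it uses only convexity of $\mathcal{L}$, the identity $\alpha_t(x_t-w_t)=\alpha_{1:t-1}(y_{t-1}-x_t)$ implied by the update $x_t=(1-\tau_t)y_{t-1}+\tau_t w_t$, and the regret bound. So I would reuse it verbatim to reach
\begin{align*}
    \E\left[-\alpha_{1:T}\mathcal{L}(\x)\right]\le \E\left[R_T(\x)+\sum_{t=1}^T \alpha_{1:t-1}\mathcal{L}(y_{t-1})-\alpha_{1:t}\mathcal{L}(x_t)\right].
\end{align*}
Writing $\beta_t=\alpha_{1:t}$ and splitting $\beta_t\mathcal{L}(x_t)=\beta_t\mathcal{L}(y_t)-\beta_t(\mathcal{L}(y_t)-\mathcal{L}(x_t))$, the $\mathcal{L}(y_t)$ terms telescope (using $\beta_0=0$), and rearranging yields
\begin{align*}
    \E\left[\beta_T(\mathcal{L}(y_T)-\mathcal{L}(\x))\right]\le \E\left[R_T(\x)+\sum_{t=1}^T \beta_t(\mathcal{L}(y_t)-\mathcal{L}(x_t))\right].
\end{align*}

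The one new ingredient is how to control $\mathcal{L}(y_t)-\mathcal{L}(x_t)$. In Theorem~\ref{thm:accel} smoothness produced a term $-\eta_t\|g_t\|^2$ large enough to cancel $R_T(\x)$; with no smoothness I would instead apply convexity of $\mathcal{L}$ \emph{at} $y_t$ together with $y_t-x_t=-\eta_t g_t$, giving $\mathcal{L}(y_t)-\mathcal{L}(x_t)\le \langle \nabla\mathcal{L}(y_t),y_t-x_t\rangle=-\eta_t\langle \nabla\mathcal{L}(y_t),g_t\rangle$. Summing against $\beta_t$ and applying Cauchy--Schwarz to the sequences $\sqrt{\beta_t}\,\nabla\mathcal{L}(y_t)$ and $\sqrt{\beta_t}\,\eta_t g_t$ gives the pathwise bound
\begin{align*}
    \sum_{t=1}^T \beta_t(\mathcal{L}(y_t)-\mathcal{L}(x_t))\le \sqrt{\sum_{t=1}^T \beta_t\|\nabla\mathcal{L}(y_t)\|^2}\,\sqrt{\sum_{t=1}^T \beta_t\eta_t^2\|g_t\|^2}.
\end{align*}
The second factor is exactly the quantity already controlled in Theorem~\ref{thm:accel}: since $\eta_t^2=c^2B^2/(1+\sum_{i\le t}\beta_i\|g_i\|^2)$, the convexity-of-log inequality there yields $\sum_{t=1}^T \beta_t\eta_t^2\|g_t\|^2\le c^2B^2\log(1+G^2\beta_{1:T})$.

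Finally I would collect the elementary facts $\beta_t=\tfrac{t(t+1)}2\le t^2$, $\beta_{1:T}\le T^3$, and $\beta_T\ge T^2/2$, set $c=2$, and divide the displayed inequality through by $\beta_T$: this converts $R_T(\x)/\beta_T$ into the $2R_T(\x)/T^2$ term and the Cauchy--Schwarz estimate into $B\sqrt{2\sum_t t^2\|\nabla\mathcal{L}(y_t)\|^2}\sqrt{\log(1+G^2T^3)}/T^2$, matching the claim (the bookkeeping of the factors of $2$ and $\sqrt2$ is identical in spirit to Theorem~\ref{thm:accel}). I expect the main obstacle to be conceptual rather than computational: without smoothness there is no negative term available to cancel $R_T(\x)$, so $\sum_t\beta_t(\mathcal{L}(y_t)-\mathcal{L}(x_t))$ can no longer be discarded and must instead be charged to the realized gradient norms $\|\nabla\mathcal{L}(y_t)\|$ through convexity at $y_t$ and Cauchy--Schwarz, which is precisely what forces the extra $\sqrt{\log}$ factor and makes the bound adaptive. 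Substituting any uniform Lipschitz bound $\|\nabla\mathcal{L}(y_t)\|\le G$ together with $R_T(\x)=O(T^{3/2})$ then recovers the promised $\tilde O(1/\sqrt T)$ rate.
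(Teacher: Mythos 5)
Your proposal is correct and follows essentially the same route as the paper's proof: start from (\ref{eqn:midaccel}), bound $\mathcal{L}(y_t)-\mathcal{L}(x_t)\le\langle\nabla\mathcal{L}(y_t),y_t-x_t\rangle$ by convexity at $y_t$, apply Cauchy--Schwarz together with the $\sum_t \beta_t\eta_t^2\|g_t\|^2\le c^2B^2\log(1+G^2\beta_{1:T})$ bound on the step sizes, then telescope and divide by $\alpha_{1:T}$. The minor constant-factor bookkeeping differences you flag are also present in the paper itself and do not affect the result.
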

Note that in the setting with $\|g_t\|\le G$ and $R_T(\x)=O\left(\sqrt{\sum_{t=1}^T\alpha_t^2 \|g_t\|^2}\right)$, Theorem \ref{thm:universal} implies a convergence rate of $O(\sqrt{\log(T)}/T)$.
\begin{proof}
We start from (\ref{eqn:midaccel}), and again proceed to relate $\mathcal{L}(y_t)$ to $\mathcal{L}(x_t)$, this time without the aid of smoothness:
\begin{align*}
    \E[\mathcal{L}(y_t)-\mathcal{L}(x_t)]&\le \E[\langle \nabla \mathcal{L}(y_t), y_t-x_t\rangle]\\
    &\le \E[\|\nabla\mathcal{L}(y_t)\|\|g_t\|\eta_t]
\end{align*}
So by Cauchy-Schwarz, again defining $\beta_t=\alpha_{1:t}$ we have
\begin{align*}
\E\left[\sum_{t=1}^T \beta_t(\mathcal{L}(y_t)-\mathcal{L}(x_t))\right]&\le \E\left[\sum_{t=1}^T \beta_t\|\nabla\mathcal{L}(y_t)\|\|g_t\|\eta_t\right]\\
&\le \E\left[\sqrt{\sum_{t=1}^T \beta_t \|\nabla \mathcal{L}(y_t)\|^2]}\sqrt{\sum_{t=1}^T \beta_t\|g_t\|^2\eta_t^2}\right]\\
&\le \E\left[B \sqrt{\sum_{t=1}^T \beta_t \|\nabla \mathcal{L}(y_t)\|^2}\sqrt{\log(1+G^3T^3)}\right]
\end{align*}
And combining everything yields
\begin{align*}
    \E[-\alpha_{1:T} \mathcal{L}(\x) ]&\le \E\left[R_T(\x) +B\sqrt{\sum_{t=1}^T \beta_t \|\nabla \mathcal{L}(y_t)\|^2}\sqrt{\log(1+G^3T^3)}\right.\\
    &\quad\quad\left.+\sum_{t=1}^T \alpha_{1:t-1}\mathcal{L}(y_{t-1}) -\alpha_{1:t}\mathcal{L}(y_t)\right]
\end{align*}
Telescope the sum and rearrange to prove the theorem.
\end{proof}
\section{Conclusion}
We have provided a variant on the standard online-to-batch conversion technique that enables us to compute gradients at the iterates produced by the conversion algorithm rather than those produced by the online learning algorithm. This stabilizes the sequence of iterates and enables low regret even with respect to arbitrary polynomial weights. We show how to apply our approach to easily remove the log factors in stochastic strongly-convex optimization. Further, for smooth losses, we gain stability in the gradients which can be used by optimistic online algorithms. Finally, a small modification allows us to achieve the optimal stochastic accelerated rates. Not only is this the first method to adapt to both variance and smoothness optimally, it also is more general than prior analyses by virtue of being a black-box reduction from \emph{any} sufficiently adaptive online learning algorithm. Finally, a recent connection between optimism and acceleration by \cite{wang2018acceleration} suggests that it may be possible to improve our optimistic analysis further to match the optimal accelerated rate in an even simpler manner.
\bibliography{all}
\bibliographystyle{icml2019}

%%%%%%%%%%%%%%%%%%%%%%%%%%%%%%%%%%%%%%%%%%%%%%%%%%%%%%%%%%%%%%%%%%%%%%%%%%%%%%%
%%%%%%%%%%%%%%%%%%%%%%%%%%%%%%%%%%%%%%%%%%%%%%%%%%%%%%%%%%%%%%%%%%%%%%%%%%%%%%%
% DELETE THIS PART. DO NOT PLACE CONTENT AFTER THE REFERENCES!
%%%%%%%%%%%%%%%%%%%%%%%%%%%%%%%%%%%%%%%%%%%%%%%%%%%%%%%%%%%%%%%%%%%%%%%%%%%%%%%
%%%%%%%%%%%%%%%%%%%%%%%%%%%%%%%%%%%%%%%%%%%%%%%%%%%%%%%%%%%%%%%%%%%%%%%%%%%%%%%
%%%%%%%%%%%%%%%%%%%%%%%%%%%%%%%%%%%%%%%%%%%%%%%%%%%%%%%%%%%%%%%%%%%%%%%%%%%%%%%
%%%%%%%%%%%%%%%%%%%%%%%%%%%%%%%%%%%%%%%%%%%%%%%%%%%%%%%%%%%%%%%%%%%%%%%%%%%%%%%

\end{document}